\documentclass{article}
\usepackage{iclr2027_conference,times}

\usepackage{amsmath,amsfonts,bm}

\def\eqref#1{equation~\ref{#1}}

\def\1{\bm{1}}

\DeclareMathAlphabet{\mathsfit}{\encodingdefault}{\sfdefault}{m}{sl}
\SetMathAlphabet{\mathsfit}{bold}{\encodingdefault}{\sfdefault}{bx}{n}

\usepackage{hyperref}
\hypersetup{colorlinks=true,linkcolor=[rgb]{0.12,0.25,0.55},citecolor=[rgb]{0.12,0.25,0.55},urlcolor=[rgb]{0.12,0.25,0.55}}
\usepackage{url}
\usepackage{graphicx}
\usepackage{booktabs}
\usepackage{amsmath}
\usepackage{amssymb}
\usepackage{multirow}
\usepackage{xcolor}
\definecolor{offsetgreen}{RGB}{52,135,72}
\definecolor{residorange}{RGB}{200,100,45}
\newcommand{\offb}{{\color{offsetgreen}\bar b}}
\newcommand{\resr}{{\color{residorange}r_t}}
\usepackage{amsthm}
\newtheorem{proposition}{Proposition}

\title{\raggedright\hyphenpenalty=10000 \exhyphenpenalty=10000 ECHO: Emergent Common-Mode Hidden Offsets Drive Vocabulary Collapse in Preference Optimization}

\author{Yun Li$^{1}$\thanks{Corresponding author.} \quad Ehsan Javanmardi$^{1}$ \quad Simon Thompson$^{2}$ \quad Jiachen Gong$^{1}$ \\
\bfseries Qunli Zhang$^{3}$ \quad Zifan Zeng$^{3}$ \quad Shiming Liu$^{3}$ \quad Peng Wang$^{3}$ \\
\bfseries Zixuan Guo$^{1}$ \quad Manabu Tsukada$^{1}$ \\[4pt]
\normalfont\small $^{1}$Graduate School of Information Science and Technology, The University of Tokyo, Tokyo, Japan \\
\normalfont\small $^{2}$TIER IV, Inc., Tokyo, Japan \qquad $^{3}$Huawei \\[2pt]
\normalfont\small \texttt{\{li-yun,ejavanmardi,jiachengong,zixuan,mtsukada\}@g.ecc.u-tokyo.ac.jp} \\
\normalfont\small \texttt{simon.thompson@tier4.jp} \\
\normalfont\small \texttt{\{zhangqunli1,zifan.zeng,liushiming3,wangpeng394\}@huawei.com}}

\iclrfinalcopy

\begin{document}

\maketitle
\lhead{Preprint}

\begin{abstract}

Preference optimization can improve every ranking metric while building a hidden, shared representation shift that the relative loss barely sees, one that can eventually destroy generation. We trace this failure to the \emph{common-mode offset}, a shared directional shift in the model's hidden states that appears across token positions, prompts, and both chosen and rejected responses. Because it moves chosen and rejected representations together, the relative ranking loss barely reacts to its accumulation. As this offset grows, it projects onto the output layer to form a \emph{vocabulary well}, elevating a single token's probability until it dominates decoding and causes generation collapse. We demonstrate the causal role of this mechanism through targeted interventions. The isolated offset alone reconstructs the vocabulary well, injecting the learned offset into an untreated base model recreates the collapse where norm-matched random directions do not, and masking this common component from the loss prevents the well in our flagship setting. Under the per-token ranking loss we study, the same collapse appears in 14 of 17 base-model lineages and under full-parameter fine-tuning. Moreover, the offset is not merely an artifact of collapse, since it is already present in textbook DPO and SimPO runs that form no well. Textbook DPO keeps it weakly aligned with any single vocabulary direction even with three times our base budget. Trained from base models for that budget, however, full-parameter SimPO drives it into collapse on both Llama-2 and Llama-3.1, and the same kind of offset grows, without a well, from a public SFT checkpoint under the official SimPO recipe. Across 17 UltraFeedback runs, the better a run ranks held-out responses, the less often two public reward models prefer its samples to the base model's. These findings expose a blind spot in standard ranking metrics and introduce a representation-level signal that can warn of generation collapse before it occurs.

\end{abstract}

\section{Introduction}

Preference optimization has become the standard paradigm for adapting pretrained language models toward human-preferred responses \citep{ouyang2022training}, ranging from pairwise Direct Preference Optimization (DPO) \citep{rafailov2023dpo} to listwise objectives based on Plackett--Luce ranking \citep{plackett1975analysis,luce1959individual,liu2025lipo}. Recent pipelines push this ranking signal further, with repeated rounds of preference optimization on continually refreshed preference data \citep{Yuan2024-mp,Grattafiori2024-rc}. However, recent work reveals a growing disconnect between successful ranking optimization and healthy text generation \citep{Rafailov2024-bz}. \citet{razin2025unintentional} identify \emph{likelihood displacement}, where the absolute likelihood of the chosen response can plummet even as its relative preference margin improves. \citet{ren2025learning} show that suppressing rejected tokens can undesirably squeeze probability mass toward tokens the base model already favored. \citet{yuan2025common} mathematically explain why chosen and rejected likelihoods move together via \emph{gradient entanglement}. While these studies illuminate how probabilities shift during ranking, the terminal state of this dynamic remains largely unexplained. How does this movement manifest inside the model, why does it eventually fixate on a seemingly irrelevant token, and why do standard ranking metrics look healthy while generation breaks down?

\begin{figure}[t]
\centering
\includegraphics[width=0.92\textwidth]{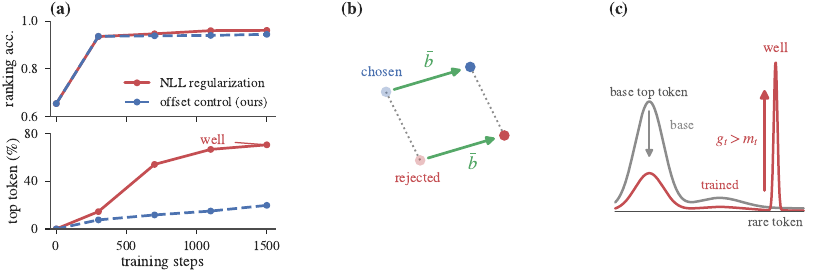}
\caption{(a) Two Llama-2 runs on the same driving lists rank held-out responses equally well, but only the one without offset control collapses onto one token. (b) The common-mode offset $\offb$ moves chosen and rejected hidden states together, so the margin the loss sees barely changes. (c) The offset lifts a rare token above the base model's top token once its gain $g_t$ exceeds the base gap $m_t$.}
\label{fig:teaser}
\end{figure}

We find that preference training induces a surprisingly simple internal anomaly. The final hidden states begin to drift by a shared vector across token positions, prompts, and both chosen and rejected responses. We term this shared movement the \emph{common-mode offset}. Because this offset moves the scores of chosen and rejected responses by nearly the same amount, a ranking loss that optimizes their relative difference has almost no incentive to penalize it. Consequently, this shared shift can keep growing in the background, masked by improving preference margins.

This hidden shift becomes destructive because the output layer projects these hidden states into vocabulary logits. When the offset pushes a single token's logit far higher than the tokens the base model would predict, it creates a \emph{vocabulary well}. Once this well becomes deep enough to dictate decoding, the model enters \emph{generation collapse}, uncontrollably repeating that single token. Importantly, the well need not be a rare token with an unusually large output-vector norm, since the same mechanism produces a well on the ordinary token \texttt{Conf} on UltraFeedback \citep{cui2024ultrafeedback}.

Causal interventions confirm that this offset causes the well. Keeping only its common component reconstructs the well, injecting it into an untreated base model recreates the collapse where norm-matched random directions do not, and centering the rejected-side loss to hide it prevents the well in the flagship setting (\S\ref{sec:mechanism}). The offset moves both chosen and rejected scores by tens of nats but their margin by less than 3 nat in an average list, which is why held-out ranking improves while generation collapses (Figure~\ref{fig:teaser}). Across 17 UltraFeedback runs, the better a run ranks held-out responses, the less often two public reward models prefer its samples to the base model's (\S\ref{sec:judge}).

This vulnerability is not an artifact of one unusual hyperparameter setting. Our main objective is the ranking term of PRO \citep{Song2024-kp}, and we observe the same collapse signature in 14 of 17 base-model lineages and under full-parameter fine-tuning (\S\ref{sec:generalization}). Moreover, the common-mode offset grows under every objective we test, including textbook DPO and SimPO, and the objectives differ in how hard they push it. We measure this push by the loss's \emph{demand}, the per-token margin beyond which it stops pushing. Textbook DPO at its usual setting has a low demand, so its offset stays weakly aligned with any vocabulary direction and generation stays intact. This protection does not come from the reference model, since adding a reference model to the per-token objective does not prevent the well (\S\ref{sec:demand}). SimPO lies between the two. Trained from base models for three times our base budget, full-parameter SimPO drives the offset into collapse, and the offset also grows, without forming a well, from a public SFT checkpoint under the official SimPO recipe.

Therefore, generation collapse is not an inevitability of preference optimization, but the extreme terminal state of a hidden dynamic that is active long before generation visibly fails. Hiding the offset from the loss and penalizing it prevents collapse up to three times the base budget in the four base models we tested.

Prior work showed that preference optimization can move probability mass where it should not go and couples chosen and rejected updates. We link these probabilistic symptoms to one translation vector in the final hidden state that the loss barely sees, show that it causes the well, and give a necessary condition for its onset. This exposes a blind spot in ranking metrics.

\section{Related Work}
\label{sec:related}

\textbf{Likelihood displacement, squeezing, and gradient entanglement.} \citet{razin2025unintentional} demonstrate that DPO training can decrease the chosen response's absolute likelihood and predict \emph{when} a dataset risks it from the similarity of the responses' hidden embeddings. \citet{yuan2025common} prove that margin-based losses entangle the gradients of chosen and rejected log-probabilities, explaining \emph{why} the two tend to move in tandem. While these works map the symptoms in probability space, we isolate in representation space the \emph{common-mode offset} of the final hidden state, which produces the well. Furthermore, \citet{ren2025learning} show that the negative gradient on rejected tokens ``squeezes'' probability mass toward tokens the base model is already confident about. The vocabulary wells we discover are different, since the base model initially ranks our well tokens far below its preferred tokens, typically by 17 to 39 nat, until the offset inflates their logits to dominate decoding.

\textbf{Regularizing absolute likelihood.} DPO-Positive \citep{pal2024smaug}, an explicit NLL term \citep{pang2024iterative}, and N-DPO, a length-normalized regularization toward the reference \citep{im2026normalized}, constrain scalar probabilities rather than the shared direction in the hidden state. In our experiments, an NLL term on the chosen response does not prevent the well (\S\ref{sec:recipe}).

\textbf{Unembedding geometry and representation anisotropy.} Pretrained embeddings and hidden states share a common mean, a few dominant directions, and outlier dimensions \citep{Mu2018-xl,Ethayarajh2019-gv,Bis2021-tl,Gao2019-sp,Puccetti2022-ri,macocco2025outlier,sun2024massive,stolfo2024confidence}, whereas the common-mode offset is built during preference \emph{training}. Functionally, it resembles activation steering \citep{Rimsky2024-to}, and DPO is known to change behavior by shifting the residual stream \citep{Lee2024-df}, but here preference optimization grows the steering vector on its own until generation collapses. Many of our severe wells emerge from under-trained tokens \citep{land2024fishing}, which merely provide a path of least resistance for the offset, and decoding-time corrections \citep{ding2026breaking} leave the training-time cause in place. Appendix~\ref{app:related} gives more detail.

\section{Setup}
\label{sec:setup}

\paragraph{Listwise and pairwise preference objectives.} Our main runs train the model to rank a list of responses. Given a prompt $x$ and $K$ responses ranked from most to least preferred, $y^{(1)} \succ y^{(2)} \succ \dots \succ y^{(K)}$, the sequential Plackett--Luce objective \citep{plackett1975analysis,luce1959individual} asks the model to prefer the top remaining response over all responses below it, one stage at a time, by minimizing
\[\mathcal{L}_{\text{PL}} = -\sum_{k=1}^{K-1} \log \frac{\exp(\beta \, s_\theta(x, y^{(k)}))}{\sum_{j=k}^{K} \exp(\beta \, s_\theta(x, y^{(j)}))}, \qquad s_\theta(x,y) = \frac{1}{\vert{}y\vert{}}\sum_{t=1}^{\vert{}y\vert{}} \log \pi_\theta(y_t \mid x, y_{<t}).\]
Each response is scored by the average per-token log-probability of the policy $\pi_\theta$, without a reference model, and $\beta>0$ sets the scale of the score differences. Up to this scale, the objective recovers the ranking term of PRO \citep{Song2024-kp}, which also adds a supervised term on the top response. For $K=2$ it becomes the length-normalized, reference-free pairwise objective of SimPO \citep{meng2024simpo} without its target margin $\gamma$, which we call \emph{pairwise ranking}.

We train on two preference datasets. The first comprises 5,383 driving-preference lists adapted from \citet{Li2025-ka}, featuring four responses ranked by the severity of their physical consequences. The second is UltraFeedback \citep{cui2024ultrafeedback}, mostly used as pairs. As a boundary case we also train textbook DPO \citep{rafailov2023dpo}, which scores a whole response by its summed log-probability ratio against a frozen base model, $s_\theta(x,y) = \log \pi_\theta(y \mid x) - \log \pi_{\text{ref}}(y \mid x)$. Of the many preference objectives, including IPO \citep{Gheshlaghi-Azar2024-en}, ORPO \citep{Hong2024-gs}, and KTO \citep{Ethayarajh2024-zd}, we study per-token ranking, SimPO, and textbook DPO, whose demands span two orders of magnitude (\S\ref{sec:demand}). We primarily train LoRA adapters \citep{Hu2021-me}, optionally adding an NLL term $\lambda_{\text{NLL}} \cdot \text{NLL}(y^{(1)})$ on the chosen response \citep{pang2024iterative}, and some runs on Llama-2, Llama-3.1, and the Llama-3-8B SFT checkpoint use full-parameter fine-tuning.

\paragraph{A nearly unconstrained translation.} Every ranking objective above depends only on differences between scores. Adding the same amount to the scores of all responses of a prompt therefore leaves the loss unchanged. This invariance makes rewards under the Bradley--Terry \citep{Bradley1952-zi} and Plackett--Luce models identifiable only up to a prompt-dependent constant \citep{rafailov2023dpo}. A shift of the hidden states shared by every position does not add exactly the same amount to every score, since its effect on each token depends on the token and its context. To first order, however, the objective sees such a shift only through its component along one direction, a weighted difference of the responses' mean score gradients (Appendix~\ref{app:invariance}). Whether a learned shift lies mostly outside that direction is an empirical question, which \S\ref{sec:invisible} answers by measurement.

\paragraph{The common-mode offset.}
\label{sec:carrier-def}
Let $h_t$ and $h_t^{\text{base}}$ denote the final hidden states that the unembedding reads at position $t$ of a rejected response (evaluated under teacher forcing) for the trained and base models, respectively. We define the displacement as $\Delta h_t = h_t - h_t^{\text{base}}$. The \emph{common-mode offset} $\offb$ is the mean displacement over the response's positions, $\offb = \frac{1}{T}\sum_{t=1}^{T} \Delta h_t$, averaged again over a probe set of rejected responses. The \emph{residual} $\resr = \Delta h_t - \offb$ represents the position-specific part, such that $\Delta h_t = \offb + \resr$ holds exactly. This decomposition requires only two checkpoints and no access to training internals. We track the norm $\|\offb\|$, the cosine between the offsets of chosen and rejected responses, and the \emph{loudness} $\kappa \log \sum_v \exp\!\big((W \offb)_v / \kappa\big)$, a smooth maximum of its projection onto the vocabulary, where $W$ is the unembedding matrix and $\kappa$ a temperature.

\paragraph{Three stages of failure.}
\label{sec:def-collapse}
\emph{Likelihood displacement} is the routine erosion of absolute log-likelihoods for both chosen and rejected responses and, in isolation, is not pathological. \emph{Vocabulary well formation} is the intermediate state, in which the probability mass released by the displacement concentrates on one or a few tokens or token families; it can be measured before any text is sampled. \emph{Generation collapse} is the terminal state, in which these tokens dominate decoding. We count a run as \emph{collapsed} once it reaches either state, that is, once one token takes most of the released mass or most of its samples degenerate, by repeating text, ending nearly empty, or cycling through the forms of one word. Released mass can also spread thinly across the vocabulary without forming a well. We call the resulting loss of sample quality \emph{diffuse degradation}, measure it by the base model's per-token negative log-likelihood of the samples, and do not count it as collapse.

\paragraph{Measurement and configuration.}
\label{sec:measurement}
\label{sec:config}
\emph{Mass flow} is computed under teacher forcing on rejected responses. At every position we track the probability mass surrendered relative to the base model, the \emph{released mass}, the share of it absorbed by the top-1 token, the \emph{capture}, and the share leaking beyond the 10,000th-ranked vocabulary item, the \emph{tail}; a well has \emph{nucleated} once it captures the majority of the released mass. \emph{Degeneration} scores sampled generations for $n$-gram repetition \citep{Holtzman2019-ca,Welleck2019-lr}, together with a word-stem ring rate and the base model's per-token negative log-likelihood of the samples, which detects non-repeating degeneration that standard repetition counts overlook (Appendix~\ref{app:rulers}). \emph{Held-out ranking} is evaluated on 600 unseen lists, or on 300 of them for a few runs, by top-1 accuracy, pairwise accuracy, and the \emph{margin} between the chosen response's mean log-probability and that of the highest-ranked rejected response (Appendix~\ref{app:setup}). Unless stated otherwise, we fine-tune a rank-32 LoRA adapter with a learning rate of $10^{-5}$ and an effective batch size of 32; full-parameter runs use a learning rate of $10^{-6}$. A \emph{base budget} is three passes over the training data, which is 504 steps on the driving lists and about three quarters of one epoch of the official SimPO training data on UltraFeedback. Extended runs operate at fixed multiples of this budget under a schedule of the same form, and a base model's \emph{onset budget} is the smallest tested multiple at which its untreated run collapses. Offsets and mass flow use 80 rejected training responses unless stated otherwise.

\section{Ranking Improves While Generation Collapses}
\label{sec:what}
\label{sec:phenomenon}
\label{sec:generalization}

We first train Llama-2-7B \citep{Touvron2023-pi} with a LoRA adapter under sequential $K=4$ Plackett--Luce ranking on the driving lists. The lists were built for the Llama-2-based driving agent of \citet{Li2025-ka}. On 600 unseen lists, top-1 accuracy reaches 0.995, and pairwise accuracy stays above 0.94. Generation, however, fails. Within 504 steps, 89\% of the probability mass released relative to the base model concentrates on a single token, \texttt{textt}, which the base model ranked deep in its vocabulary tail, and almost every sampled generation degenerates into repeating it. By every ranking metric the objective has succeeded, yet the output distribution has collapsed.

We replicate this training across 17 base-model lineages from five tokenizer families, spanning 0.5B to 14B parameters. Fourteen of the 17 collapse within six times the base budget (Table~\ref{tab:crossfamily}; Appendix Table~\ref{tab:appendix-crossfamily}), including TinyLlama-1.1B and the Qwen2.5 models from 0.5B to 3B. Twelve form a single-token well, and in Qwen2.5-14B and Gemma-2-9B most samples cycle through the forms of one word, \texttt{collision} or \texttt{crash}. Phi-3-mini and Gemma-2-2B, tested only up to three times the budget, stay below this criterion, although 39\% of Phi-3-mini's samples repeat \texttt{speed}. The clear exception is Llama-3.2-3B. Its rejected responses fall by over 30 nat per token, yet its released mass stays on ordinary tokens and its samples stay nearly as fluent as the base model's (\S\ref{sec:geometry}).

\begin{table}[t]
\caption{Every base here but Llama-3.2-3B collapses, onto one token or the forms of one word. Untreated runs at their onset budget, or at $6\times$ for Llama-3.2-3B. Ring counts samples dominated by one word stem, on the 12 saved samples where marked $\dagger$ and unmeasured where marked --.}
\label{tab:crossfamily}
\centering
\scriptsize
\begin{tabular}{lcccccl}
\toprule
Base & Budget & $\Delta$chosen (nat) & Capture / tail & Degen. / ring & $\|\offb\|$ & Well \\
\midrule
Llama-2-7B     & $1\times$  & $-10.97$ & 89.4\% / 89.5\%  & 96.0\% / --      & 92.4  & \texttt{textt} \\
Mistral-7B-v0.3& $1\times$  & $-17.51$ & 95.1\% / 95.1\%    & 100\% / 0\%      & 300.4 & \texttt{/******/} \\
Qwen2.5-7B     & $3\times$ & $-21.67$ & 93.9\% / 86.7\%    & 99.3\% / --      & 328.7 & CJK fragment \\
Qwen2.5-14B    & $3\times$ & $-3.85$  & 27.0\% / 7.7\%     & 80.3\% / 78.7\%  & 198.6 & \texttt{collision}-fam. \\
Gemma-2-9B     & $1\times$  & $+0.62$  & 12.5\% / 37.5\%    & 0\% / 11 of 12$^\dagger$ & 163.7 & \texttt{crash}-fam. \\
Llama-3.1-8B   & $1\times$  & $-14.27$ & 87.6\% / 88.5\%    & 18.3\% / 22.0\%  & 115.2 & \texttt{.scalablytyped} \\
OLMo-2-7B      & $6\times$ & $-2.90$  & 89.6\% / 89.6\%    & 21.0\% / 8.0\%   & 87.8  & \texttt{contador} \\
OLMo-3-7B      & $3\times$ & $-6.83$  & 93.8\% / 93.8\%    & 96.0\% / 0\%     & 79.7  & (quote+4$\times$newline) \\
\midrule
Llama-3.2-3B   & $6\times$ & $+1.17$  & 43.7\% / 8.3\%     & 10.7\% / 7.7\%   & 66.7  & none (ordinary tokens) \\
\bottomrule
\end{tabular}
\end{table}

The collapse does not depend on adapters either. Full-parameter fine-tuning of Llama-2-7B at a learning rate of $10^{-6}$ collapses onto \texttt{textt} by step 250 of the same 504-step budget. Under full-parameter fine-tuning, onset tracks the learning rate summed over training, and from two earlier runs we wrote down before this run that it would collapse between steps 200 and 300 (Appendix~\ref{app:fullft}). The failure is therefore tied neither to one model lineage nor to adapters.

\section{The Common-Mode Offset Causes the Well}
\label{sec:mechanism}

\begin{figure}[t]
\centering
\includegraphics[width=0.88\textwidth]{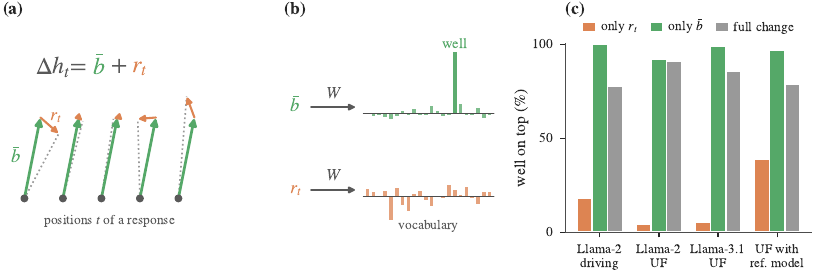}
\caption{(a) At every position of a rejected response, the shift of the final hidden state splits into the shared offset $\offb$ and a position-specific residual $\resr$. (b) Through the unembedding $W$, only $\offb$ singles out one vocabulary row. (c) Adding only $\offb$ to the base hidden state puts the well on top at nearly every rejected position, while adding only $\resr$ rarely does; UF denotes UltraFeedback.}
\label{fig:mechanism}
\end{figure}

\paragraph{The offset is shared across positions, prompts, and both responses.}
\label{sec:localizing}
We computed the offset separately for each of 900 rejected responses and compared each with their global mean. Across prompts, the per-response offsets align with their mean with median cosines above 0.97 on each of the six runs we measured. Across the two sides of a preference pair, the chosen and rejected offsets exhibit cosines of 0.83 to 0.996. The shift is therefore a single geometric translation that moves both sides of every pair together (Figure~\ref{fig:teaser}b). Furthermore, this shared vector points at the well, since for single-token wells the well token has the largest entry of the projected vector $W\offb$ (Figure~\ref{fig:mechanism}b).

\paragraph{The loss barely sees the offset.}
\label{sec:invisible}
We added the trained offset to the base model's hidden states and recomputed the scores of chosen and rejected responses on held-out lists. In the four collapsing runs we evaluated, the offset lowers both scores by 15 to 65 nat per token but moves the margin between them by only 0.2 to 2.7 nat in an average list (Appendix Table~\ref{tab:nullspace}). Its effect on both scores is 10 to 270 times its effect on their difference, and its cosine with the gradient of that difference in the base model is at most 0.16. The objective therefore observes a healthy margin, while the log-probabilities underneath fall by tens of nats. A $z$-loss \citep{Chowdhery2023-hc,Team-OLMo2024-ex} that compresses the log-partition function 2.5-fold still leaves 65 to 66\% of the released mass in the well (Appendix~\ref{app:zloss}). Little in the loss therefore holds the offset back, and fine-tuning alone already builds one. A supervised control without the ranking term grows an offset of norm 56 shared by chosen and rejected responses (cosine 0.99), yet it lifts no token by more than 3.4 (Appendix~\ref{app:sft}).

\paragraph{Keeping, hiding, adding, or removing the offset controls the well.}
\label{sec:causal}
We test the offset's causal role in four ways. The first adds $\offb$ alone or $\resr$ alone to the base hidden state. On untreated runs, $\offb$ alone makes the well the top token at nearly every rejected position, more often than the full trained state, while $\resr$ alone rarely does (Figure~\ref{fig:mechanism}c). The residual therefore partly works against the well. The same holds under full-parameter fine-tuning, where the fine-tuned unembedding rows alone almost never lift the well (Appendix~\ref{app:fullft}).

The second test hides the offset from the loss. \emph{Centering} computes the rejected-side ranking loss on $h_t - \text{stopgrad}(\offb)$. The rejected scores then no longer change when the offset grows, so pointing $\offb$ at one vocabulary row no longer lowers them. The gradient can still move the offset, since it treats the subtracted mean as a constant. On the Llama-2 flagship, centering keeps most of the offset's norm but turns its projection away from the vocabulary, and no well forms.

The third test transplants the trained offset into an untreated base model. This \emph{injection} adds $\offb$ to the final hidden state at every position, causing the well token to dominate generation on every held-out prompt. Stripping only the well's row component from $\offb$ removes that well, but generation instead collapses onto a different single token, whereas random or orthogonal directions of the same norm produce no well. The well therefore follows the offset rather than any particular token. The fourth test removes the offset from a model that has already collapsed. Subtracting half of its logit bias $W\offb$ from the untreated flagship model at decoding time lowers the share of degenerate greedy samples on 60 held-out prompts from 100\% to 0\%, and the chosen response regains 8.7 nat per token (Appendix~\ref{app:recovery}). Keeping only the offset reconstructs the well, hiding it prevents the well, adding it creates one, and removing it stops the collapse, making the offset a controllable cause of collapse.

\section{When the Offset Becomes a Well}
\label{sec:when}
\begin{figure}[t]
\centering
\includegraphics[width=0.88\textwidth]{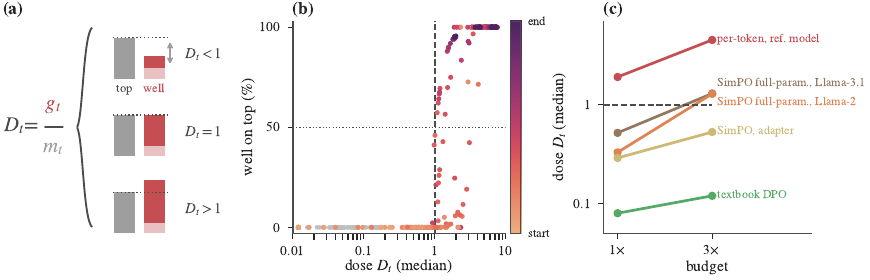}
\caption{(a) The offset overrides the base model at a position once its gain $g_t$ on the would-be well exceeds the base gap $m_t$, that is, once the dose $D_t$ exceeds one. (b) Across 510 snapshots of ten collapsing lineages, colored by training progress, and three summed-score runs that stay healthy, in gray, the well never tops a majority of rejected positions while the median dose stays below one. (c) Median doses of Table~\ref{tab:objectives} at the usual and the three-fold budget.}
\label{fig:dosemain}
\end{figure}

\paragraph{A well forms only after the offset's gain passes the base gap.}
\label{sec:dose}
The formation of a well depends on a logit ratio. Let $m_t$ be the \emph{base gap}, which measures how much the base model prefers its own top token over the candidate well token at position $t$, and let $g_t$ be the logit advantage that $\offb$ adds to that well token. In the offset-only state, the well overtakes the base model's top token where the \emph{dose} $D_t = g_t/m_t$ exceeds one (Figure~\ref{fig:dosemain}a). Scaling the injected offset confirms this threshold, since a quarter-strength $\offb$ fails to trigger the well while a half-strength one pushes $D_t$ above one and puts the well on top at nearly all rejected positions. We also tracked the dose along natural training runs (Figure~\ref{fig:dosemain}b; Appendix Figure~\ref{fig:dose} shows each lineage). Across all 510 snapshots, the well never tops a majority of rejected positions while the median dose stays below one, so in our untreated runs a well needs a dose above one, but a dose above one does not guarantee a well. In Llama-3.2-3B the median dose reaches 2.75 on the ordinary subword \texttt{e}. Yet the trained model puts \texttt{e} on top at only 29\% of rejected positions, against 80 and 83\% for the wells of Llama-3.1 and Llama-2, and its samples stay nearly as fluent as the base model's.

\paragraph{Wells tend to land on anomalous vocabulary rows.}
\label{sec:geometry}
Because the same $\offb$ projects more strongly onto rows with large norms, rare tokens with large unembedding rows gain the most. In the adapter runs on the driving data, most single-token wells sit in the top 1\% of their base model's row norms and never occur in WikiText-103 \citep{merity2017pointer}, whereas in Llama-3.2-3B the highest-norm rows belong to common subwords. High row norms are not required, however. On UltraFeedback Llama-2 collapses onto the ordinary token \texttt{Conf}, which the offset alone still puts on top at nearly every position (Figure~\ref{fig:mechanism}c). Row geometry makes some tokens more exposed, but in untreated runs the offset is what lifts them.

\paragraph{Different losses push the offset by very different amounts.}
\label{sec:demand}
We trained Llama-2-7B on the UltraFeedback pairs with our per-token ranking loss, the same loss with a reference model, SimPO, and textbook DPO, each at its usual setting and, except the plain per-token loss, at three times the budget. We also ran SimPO with full-parameter fine-tuning on Llama-2-7B and Llama-3.1-8B. Table~\ref{tab:objectives} and Appendix Figure~\ref{fig:std-recipes} report each run's demand, offset norm, dose, capture, share of degenerate samples, and win rate over the starting model under Skywork-Reward-V2 \citep{Liu2026-ht}.

Only the per-token ranking loss collapses at the usual budget, and adding a reference model does not stop it. SimPO forms no well at its usual setting, although its samples still degrade, and it passes the threshold with three times the budget under full-parameter fine-tuning on both Llama-2 and Llama-3.1 (Figure~\ref{fig:dosemain}c). Textbook DPO stays far below the threshold even with three times the budget, because its offset stays short and spreads across the vocabulary.

This difference stems from how far each loss keeps pushing. A ranking loss stops pushing once the chosen response beats the rejected one by a large enough margin, and we call the per-token size of that margin the loss's \emph{demand}.\footnote{A logistic ranking loss never stops pushing exactly, but once the scaled score margin reaches 5, its gradient weight has fallen to about 1\% of its value at zero margin. We take this point as the stopping margin, which gives a demand of about $5/\beta$ nat under per-token averaging and $5/(\beta|y|)$ nat under scores summed over a response of $|y|$ tokens, with $|y|=182$ on average in our UltraFeedback pairs. SimPO's target margin $\gamma$ adds $\gamma/\beta$ to its demand. Appendix Figure~\ref{fig:demand} shows the full sweep over $\beta$.} Our per-token loss keeps pushing until the gap reaches about 42 nat per token, well beyond the base model's own preference for its top token, so the dose passes one. SimPO stops at about 2.4 nat. Textbook DPO sums the score over the whole response, which spreads the same margin across hundreds of tokens and stops it at about 0.3 nat per token.

The reference model is therefore not what protects textbook DPO. Adding one to the per-token loss leaves a well in place on both datasets and at both budgets, with an offset as long as the untreated run's. What protects textbook DPO from the well is its low demand. Summing the score is roughly equivalent to averaging it with $\beta$ multiplied by the response length, so at the usual $\beta$ the protection comes from the resulting low demand.

\begin{table}[t]
\caption{Each loss at its usual setting and, where listed, with three times the budget, with doses above one in bold. Runs start from the base model unless marked SFT, and all train on the same UltraFeedback pairs. SimPO uses $\beta=2.5$ and $\gamma=1.0$ from base models and the official $\beta=2.0$ and $\gamma=1.0$ from the SFT checkpoint. Degen. counts repetitive or nearly empty samples, and Win compares a run's samples with those of the model it starts from.}
\label{tab:objectives}
\centering
\scriptsize
\begin{tabular}{llcccccc}
\toprule
Objective & Budget & Demand (nat) & $\|\offb\|$ & Dose & Capture & Degen. & Win \\
\midrule
Per-token ranking, $\beta=0.12$ & $1\times$ & 42 & 84 & \textbf{1.92} & 97\% & 99\% & 25\% \\
\quad with a reference model & $1\times$ & 42 & 79 & \textbf{1.91} & 86\% & 99\% & 26\% \\
\quad with a reference model & $3\times$ & 42 & 91 & \textbf{4.53}$^\dagger$ & 89\% & 99\% & 17\% \\
\midrule
SimPO & $1\times$ & 2.4 & 114 & 0.29 & 0.4\% & 0.7\% & 7\% \\
SimPO & $3\times$ & 2.4 & 82 & 0.53 & 9\% & 28\% & 6\% \\
SimPO, full-parameter & $1\times$ & 2.4 & 65 & 0.33 & 3\% & 0.7\% & 10\% \\
SimPO, full-parameter & $3\times$ & 2.4 & 85 & \textbf{1.29} & 27\% & 80\% & 7\% \\
\quad on Llama-3.1-8B & $1\times$ & 2.4 & 75 & 0.52 & 0.3\% & 30\% & 13\% \\
\quad on Llama-3.1-8B & $3\times$ & 2.4 & 101 & \textbf{1.30}$^\dagger$ & 45\% & 86\% & 3\% \\
\quad from SFT, on Llama-3-8B & $1\times$ & 3.0 & 33 & 0.15 & 2.5\% & 1.3\% & 63\% \\
\quad from SFT, on Llama-3-8B & $3\times$ & 3.0 & 77 & 0.19 & 2.7\% & 0\% & 34\% \\
\midrule
Textbook DPO, $\beta=0.1$ & $1\times$ & 0.3 & 14 & 0.08 & 4\% & 3\% & 68\% \\
Textbook DPO, $\beta=0.1$ & $3\times$ & 0.3 & 44 & 0.12 & 4\% & 1\% & 66\% \\
\bottomrule
\end{tabular}
\par\vspace{2pt}{\scriptsize $^\dagger$Computed for the run's own leading token, \texttt{D} or \texttt{Apache}; all other doses are for \texttt{Conf}.}
\end{table}

Between these extremes, SimPO within its published range grows an offset about as long as the collapsing run's, but a quiet one whose dose remains below one, so no well forms. Nevertheless, its generation quality degrades. The released mass spreads deep into the vocabulary instead of settling on one token, and the samples devolve into strings of unrelated words that standard repetition counts fail to detect, while the reward model prefers the base model's sample on about nine in ten prompts. However, adding the SimPO offset alone to the base model does not reproduce this diffuse form (Appendix~\ref{app:rm}). This diffuse degradation is therefore a second route to damage, separate from the well. The missing fine-tuning stage alone does not explain this damage, since textbook DPO from the same base model wins against it on about two thirds of prompts.

Under a three-fold budget, however, the offset grows louder, concentrating its projection onto a narrow set of vocabulary rows. Its dose keeps rising, and under full-parameter fine-tuning it passes one on both Llama-2 and Llama-3.1. By the end of training, 80 to 86\% of samples degenerate, and held-out ranking is the best of any UltraFeedback run on that base.

We also trained SimPO from its authors' public SFT checkpoint of Llama-3-8B with their hyperparameters (Appendix~\ref{app:sftstart}). From this start, no well forms within three times the base budget, and the median dose of every token stays below 0.4. The offset nonetheless grows from a norm of 33 at the base budget to 77 at three times the budget. It stays quiet, lifting no token by more than 4.3 nat, against about 18 nat for SimPO from the base models at the same budget. Meanwhile the SFT model's per-token negative log-likelihood of the samples rises from 1.35 to 3.86. Two public reward models agree, preferring the one-fold run's samples to the SFT model's on 63 to 65\% of held-out prompts but the three-fold run's on only 33 to 34\%. Supervised fine-tuning enlarges the base gaps of the candidate wells by at most 22\%, so by the dose it should delay the well rather than prevent it.

Supervised fine-tuning already produces an offset of the same form, and the preference loss sets where it points and how far it grows (Appendix~\ref{app:sft}). Small $\beta$ and listwise objectives therefore do not invent a new failure mode; they push harder on a shift that fine-tuning already produces.

\section{Controlling the Offset}
\label{sec:recipe}
\label{sec:recipe-defn}
Centering hides the offset from the loss but leaves most of its norm within the model. Our recipe therefore keeps centering and adds two penalty terms that act on the offset itself, used alone or together, and we call the combination \emph{both}. A norm penalty $\lambda_{\text{norm}} \|\offb\|$ shrinks the offset, and a loudness penalty $\lambda_{\text{loud}} \cdot \kappa \log \sum_v \exp\!\big((W\offb)_v/\kappa\big)$ flattens its vocabulary projection without necessarily shrinking its norm. On the driving lists the recipe also keeps the NLL term of weight 1.0, so it differs from the NLL-only baseline only by offset control. Offset control matters most for objectives worth keeping despite a high demand. Listwise ranking, for example, also supervises the order among the rejected responses, which pairwise objectives never see, and on held-out driving lists it keeps that order at 0.82 to 0.92 while pairwise ranking inverts it (Appendix~\ref{app:listwise}).

\paragraph{Cross-family validation up to three times the base budget.}
\label{sec:recipe-validation}
We applied the recipe to four base models at or beyond the budget where each untreated run collapses, with Llama-2, Qwen2.5-7B, and Llama-3.1 at three times the base budget (Table~\ref{tab:recipe}). No run forms a well. The largest share of the released mass, at most 20\%, goes to an ordinary token such as the period, the tail stays at or below 5.5\%, and sampled degeneration remains at or below about 12\%. By contrast, at the same three-fold budget on Llama-2, the NLL-only baseline, N-DPO, and DPO-Positive leave 71 to 81\% of the released mass in a single well (Table~\ref{tab:recipe}). The cost to ranking performance is small, since held-out pairwise accuracy settles between 0.94 and 0.96. On Llama-2, Mistral, and Qwen2.5-7B, this is within 2 points of a run of the same base and budget without offset control. Controlling the offset is therefore enough to prevent collapse up to three times the base budget, at almost no cost to ranking.\label{sec:recipe-limits} At six times the budget on OLMo, the residual, not $\offb$, lifts the surviving well, and a residual penalty closes the remaining routes on OLMo-3 (Appendix~\ref{app:extended}).

\begin{table}[t]
\caption{The recipe, centering plus a norm penalty, a loudness penalty, or both, at or beyond each base's onset budget, where capture goes to an ordinary token. Three Llama-2 baselines without offset control keep 71 to 81\% of the released mass on one rare token.}
\label{tab:recipe}
\centering
\scriptsize
\begin{tabular}{lccccc}
\toprule
Base & Penalty & Steps & Capture / tail & Degen. & Held-out top-1 / pairwise \\
\midrule
Llama-2-7B     & loudness & 1512 & 19.8\% / 1.5\%  & 6--12\% & 1.00 / 0.95 \\
Mistral-7B     & both     & 504  & 14.6\% / 1.4\%   & 3.0\%  & 1.00 / 0.94 \\
Qwen2.5-7B     & both     & 1512 & \phantom{0}8.0\% / 0.2\%   & 1.7\%  & 1.00 / 0.96 \\
Llama-3.1-8B   & both     & 1512 & 13.0\% / 5.5\%   & 6.7\%  & 1.00 / 0.96 \\
\midrule
\multicolumn{6}{l}{\emph{Llama-2-7B without offset control}} \\
NLL only       & --       & 1512 & 70.6\% / 71.0\%  & 2.0\%  & 1.00 / 0.97 \\
N-DPO          & --       & 1512 & 78.1\% / 78.4\%  & 18.0\% & 1.00 / 0.96 \\
DPO-Positive   & --       & 1512 & 81.4\% / 82.2\%  & 24.3\% & 1.00 / 0.96 \\
\bottomrule
\end{tabular}
\end{table}

\section{What Ranking Metrics Miss}
\label{sec:dissociation}

\paragraph{An external judge disagrees with the ranking metrics.}
\label{sec:judge}
Held-out pairwise accuracy levels off at 0.93 to 0.97 within 300 to 500 steps, treated or not, while the margin keeps growing. The untreated run with the largest held-out margin typically collapses, in line with the over-optimization of direct alignment algorithms \citep{Rafailov2024-bz} (Appendix Figure~\ref{fig:margin-app}). We evaluated one sample for each of 300 held-out UltraFeedback prompts using two public reward models, Skywork-Reward-V2 \citep{Liu2026-ht} and ArmoRM \citep{Wang2024-kb}, across 17 Llama-2 runs (Appendix Table~\ref{tab:rm} and Figure~\ref{fig:margin}a). Across these runs, the better a run's held-out pairwise accuracy, the less often the external judge prefers its sample to the base model's, with a Spearman correlation of $-0.78$ for Skywork-Reward-V2 and $-0.79$ for ArmoRM. The runs that rank best move the most probability away from the base model's likely tokens, and the judge tracks how much stays there. The share of mass remaining on the base model's ten most likely tokens correlates with the win rate at $+0.74$ and $+0.87$. Since the base models of these runs are not instruction-tuned, we use the win rate only to compare runs. By this measure, the recipe with an NLL term of weight 1.0 matches textbook DPO, winning 65\% of prompts against 66 to 68\%, as does the NLL-only baseline at 64\%.

\paragraph{The offset warns before the well forms.}
\label{sec:early}
In seven of the nine lineages with snapshots every 50 steps, the eventual well enters the eight largest entries of $W\offb$ before it takes over, up to 1,200 steps ahead, even while held-out ranking has long saturated (Appendix Figure~\ref{fig:margin}b). Tracking $\|\offb\|$, its loudness, and the leading entries of $W\offb$ during training therefore offers an early signal that standard ranking metrics do not give, although we have read it only from completed runs.

\section{Conclusion}
\label{sec:conclusion}
Preference optimization can collapse generation while held-out ranking stays high. We traced this failure to a common-mode offset, a shift of the final hidden state shared across positions and across chosen and rejected responses, which the ranking objective barely penalizes. A well forms only after the offset lifts one token above the tokens the base model would otherwise predict. Different ranking losses turn the same preference pressure into very different doses. Hiding the offset from the loss and penalizing it prevents collapse up to three times the base budget. Across runs, held-out ranking accuracy even anticorrelates with an external judge ($\rho=-0.78$). Ranking accuracy and preference margin measure only what the loss constrains, so the offset deserves monitoring.

\paragraph{Limitations.}
\label{sec:discussion}
Our evidence spans 17 base-model lineages, two datasets, and adapter and full-parameter training, with two further seeds for the flagship settings and one seed elsewhere (Appendix~\ref{app:scope}). The penalty strengths are calibrated on the driving data, the six-fold result covers the OLMo family only, and apart from the SFT-start runs the demand analysis starts from base models.

\section*{Reproducibility Statement}
Training configurations (base models, LoRA rank, learning rate, batch composition, and the form of every penalty term), the exposure-ordering and offset-decomposition measurement code, and the held-out ranking probe are described in \S\ref{sec:setup}--\S\ref{sec:recipe} and Appendix~\ref{app:details}; code with all penalty coefficients and the training and held-out preference data will be released upon publication. Held-out ranking numbers are computed on 600 lists, or on 300 of them where stated, whose prompts and chosen responses do not appear in any training split; the corpus counts in \S\ref{sec:geometry} use the full training text and the WikiText-103 training split.

\section*{AI Use Statement}
The authors set the research questions, made every final design decision, and checked every reported number against the logged experimental runs and every citation against its source. Within this process, we used generative AI tools (Claude Code and Gemini) to help develop the conceptual framework, propose and refine hypotheses, design experiments and give feedback on the methodology, formulate mathematical claims and assist in writing their proofs, implement the methods, clean and reformat datasets, interpret results, and assist with translation. We did not use these tools to generate synthetic datasets. We have reviewed all AI-assisted work and take responsibility for the final content of this work, including text, claims or artifacts produced with the aid of generative AI.

\subsubsection*{Acknowledgments}
This work was supported by JSPS KAKENHI Grant Number JP26KJ0799 and by TIER IV, Inc.

\bibliography{references}
\bibliographystyle{iclr2027_conference}

\clearpage
\appendix
\section*{Appendix Overview}
Appendix~\ref{app:related} extends the related work of \S\ref{sec:related}. Appendix~\ref{app:details} collects the setup details, full results, and supporting measurements behind the main text, and ends with the formal statement of the common-shift invariance. The third column names the main-text section that each part supports.

\begin{center}
\small
\begin{tabular}{@{}l p{0.70\textwidth} l r@{}}
\toprule
Part & Contents & Supports & Page \\
\midrule
\ref{app:related} & \textit{Extended related work.} Iterated preference optimization, likelihood displacement, likelihood regularizers, and representation geometry. & \S\ref{sec:related} & \pageref{app:related} \\
\ref{app:onset} & \textit{Full cross-family onset table.} All 17 lineages, the budget at which each collapses if it does, and the collapse criterion. & \S\ref{sec:generalization} & \pageref{app:onset} \\
\ref{app:setup} & \textit{Setup and measurement details.} $\beta$ per scene type, exposure, and the full definitions of every metric. & \S\ref{sec:setup} & \pageref{app:setup} \\
\ref{app:curves} & \textit{Training curves and extended-budget trajectories.} Flagship Llama-2 runs and six-fold-budget OLMo runs. & \S\ref{sec:recipe}, \S\ref{sec:dissociation} & \pageref{app:curves} \\
\ref{app:extended} & \textit{Routes of the released mass at extended budget.} Where the mass goes under each penalty, and the residual penalty. & \S\ref{sec:recipe} & \pageref{app:extended} \\
\ref{app:penalty} & \textit{Penalty trajectories across families.} Offset norm and its largest vocabulary entry under each penalty. & \S\ref{sec:recipe} & \pageref{app:penalty} \\
\ref{app:listwise} & \textit{A listwise advantage orthogonal to collapse.} Ordering of unseen rejected responses by objective. & \S\ref{sec:recipe} & \pageref{app:listwise} \\
\ref{app:recovery} & \textit{Removing the offset from a collapsed model.} Subtracting its logit bias at decoding time undoes the collapse. & \S\ref{sec:causal} & \pageref{app:recovery} \\
\ref{app:zloss} & \textit{Absolute logit scale.} A $z$-loss compresses $\log Z$ without removing the well. & \S\ref{sec:mechanism} & \pageref{app:zloss} \\
\ref{app:dose} & \textit{Dose along training and the objective's demand.} Dose per lineage, the sweep over $\beta$, and standard-recipe offsets. & \S\ref{sec:when} & \pageref{app:dose} \\
\ref{app:fullft} & \textit{Full-parameter onset.} Onset tracks the integrated learning rate. & \S\ref{sec:generalization}, \S\ref{sec:mechanism} & \pageref{app:fullft} \\
\ref{app:sft} & \textit{A supervised fine-tuning control.} Plain fine-tuning builds a shared offset but no well. & \S\ref{sec:mechanism}, \S\ref{sec:when} & \pageref{app:sft} \\
\ref{app:sftstart} & \textit{SimPO from a supervised fine-tuning checkpoint.} The official recipe from a public SFT model, one-fold and three-fold budgets. & \S\ref{sec:demand} & \pageref{app:sftstart} \\
\ref{app:rm} & \textit{External reward models.} Judge protocol, win rates of all runs, the early-signal panel, and ranking against the margin. & \S\ref{sec:when}, \S\ref{sec:dissociation} & \pageref{app:rm} \\
\ref{app:rulers} & \textit{Shapes of well that standard rulers miss.} Latent wells, non-repeating wells, and word-family rings. & \S\ref{sec:setup}, \S\ref{sec:when} & \pageref{app:rulers} \\
\ref{app:supp} & \textit{Supplementary measurements.} Further numbers behind the claims of the main text. & \S\ref{sec:mechanism}--\S\ref{sec:dissociation} & \pageref{app:supp} \\
\ref{app:scope} & \textit{Seeds and scope.} Seed repeats, adapter ranks, and the ranges of the demand analysis. & \S\ref{sec:conclusion} & \pageref{app:scope} \\
\ref{app:invariance} & \textit{Common-shift invariance and the trained offset.} Two propositions and the measured split of the offset. & \S\ref{sec:setup}, \S\ref{sec:mechanism} & \pageref{app:invariance} \\
\bottomrule
\end{tabular}
\end{center}

\section{Extended Related Work}
\label{app:related}

\textbf{Iterated preference optimization.} Recent pipelines repeat preference optimization over several rounds on continually refreshed preference data \citep{Yuan2024-mp,Grattafiori2024-rc}, although sampling more responses per prompt does not always help \citep{Xiao2025-jp}.

\textbf{Likelihood displacement, squeezing, and gradient entanglement.} Recent studies have documented the probabilistic symptoms of preference optimization, but from different angles. \citet{razin2025unintentional} demonstrate that DPO training can decrease the chosen response's absolute likelihood. They trace this to the unembedding matrix's geometry, using a centered hidden-embedding similarity score to predict \emph{when} a dataset risks spilling rejected updates onto chosen ones. \citet{yuan2025common} prove that margin-based losses entangle the gradients of chosen and rejected log-probabilities, explaining \emph{why} the two tend to move in tandem. While these works map the symptoms in probability space, we isolate in representation space the \emph{common-mode offset} of the final hidden state, which produces the well. Furthermore, \citet{ren2025learning} show that the negative gradient on rejected tokens ``squeezes'' probability mass toward tokens the base model is already confident about. Pushing down the probability of unwanted tokens is also how unlikelihood training reduces repetition \citep{Welleck2019-lr}, whereas in preference optimization the push on rejected responses can end in repetition. The vocabulary wells we discover represent a different phenomenon, since the base model initially ranks our well tokens far below its preferred tokens, typically by 17 to 39 nat. It is the accumulation of the common-mode offset that inflates the logits of these low-confidence tokens, until one of them overcomes this gap and dominates decoding at most positions.

\textbf{Regularizing absolute likelihood.} Several methods anchor the absolute probabilities that a relative ranking loss leaves free. DPO-Positive \citep{pal2024smaug} penalizes the chosen log-probability if it falls below its reference value; \citet{pang2024iterative} inject an explicit NLL term for reasoning tasks; and \citet{im2026normalized} regularize the length-normalized log-probability of the full response set toward the reference. However, because these interventions constrain the scalar probabilities rather than the underlying vector representations, they do not directly constrain the shared direction in the hidden state. In our experiments, an NLL term on the chosen response delays the growth of the common-mode offset without eliminating it (Figure~\ref{fig:olmo6x}). When it does prevent immediate collapse, the vocabulary well relocates to positions \emph{after} the desired answer, where the NLL term exerts no constraint.

\textbf{Unembedding geometry and representation anisotropy.} Prior work has mapped the geometry of pretrained representations. Word embeddings and contextual hidden states are anisotropic, sharing a common mean and a few dominant directions \citep{Mu2018-xl,Ethayarajh2019-gv,Bis2021-tl}, and standard training often squeezes them into a narrow cone \citep{Gao2019-sp}. Furthermore, specific outlier dimensions are known to track token frequency \citep{Puccetti2022-ri,macocco2025outlier}, encode massive activations \citep{sun2024massive}, or regulate output entropy \citep{stolfo2024confidence}. However, these works describe the geometry of a \emph{pretrained} model. The common-mode offset we study is constructed during preference \emph{training}. It does not reshape the entire embedding space into a cone; instead, it acts as a single, shared displacement vector added to the hidden states, which increases the inner product with one specific vocabulary row. Functionally, this resembles activation steering, where adding a fixed vector at every position predictably alters a model's behavior \citep{Rimsky2024-to}. Mechanistic studies of DPO also find that it changes behavior by shifting the residual stream, bypassing rather than removing capabilities the model already has \citep{Lee2024-df}. The key distinction is that preference optimization grows this steering vector on its own in the background, and the behavior it ultimately produces is generation collapse. Finally, \citet{land2024fishing} identify ``under-trained'' tokens with near-zero frequencies in base models. While many of the severe vocabulary wells in our experiments happen to emerge from this vulnerable population, these fragile tokens do not spontaneously cause collapse; they merely provide a path of least resistance for the common-mode offset. Decoding-time corrections that subtract unconditional logit terms \citep{ding2026breaking} act after the fact and leave the training-time cause in place.

\section{Additional Details}
\label{app:details}

\subsection{Full Cross-Family Onset Table}
\label{app:onset}
Table~\ref{tab:appendix-crossfamily} reports every base-model lineage with at least one untreated run.

\begin{table}[h]
\caption{Fourteen of the 17 base-model lineages collapse within six times the base budget (\S\ref{sec:generalization}). A run collapses once one token takes most of the released mass, a well, or once most of its samples degenerate or ring on one word stem, a word ring. Each row shows the untreated run at the onset budget, or at the largest budget tested when no collapse occurs. The onset multiple is relative to the 504-step base budget under the fixed schedule. TinyLlama, the three smallest Qwen2.5 models, and Llama-3.2-3B were trained at one and six times the budget only, Qwen2.5-14B first at three times, Phi-3-mini and Gemma-2-2B at one and three times, and Gemma-2-9B at the base budget only. Ring rates marked -- come from audits that predate the ring measure, and $\dagger$ marks a count on the 12 saved samples. Each family is cited at its first row.}
\label{tab:appendix-crossfamily}
\centering
\scriptsize
\begin{tabular}{llcccc}
\toprule
Base & Vocabulary & Onset multiple & Capture & Degen. / ring & Collapse \\
\midrule
Llama-2-7B \citep{Touvron2023-pi} & 32k & $1\times$ & 89.4\% & 96.0\% / -- & well \\
LLaMA-1-7B \citep{Touvron2023-fh} & 32k & $1\times$ & 53.0\% & 86.3\% / -- & well \\
TinyLlama-1.1B \citep{Zhang2024-fp} & 32k & $6\times$ & 99.0\% & 100\% / 100\% & well \\
Llama-3.2-1B-Instruct \citep{Grattafiori2024-rc} & 128k & $1\times$ & 88.8\% & 61.0\% / -- & well \\
Llama-3.2-3B & 128k & none within $6\times$ & 43.7\% & 10.7\% / 7.7\% & no \\
Llama-3.1-8B & 128k & $1\times$ & 87.6\% & 18.3\% / 22.0\% & well \\
Qwen2.5-0.5B \citep{Yang2024-rt} & 152k & $6\times$ & 93.9\% & 7.7\% / 6.7\% & well \\
Qwen2.5-1.5B & 152k & $6\times$ & 93.8\% & 99.7\% / 1.3\% & well \\
Qwen2.5-3B & 152k & $6\times$ & 89.6\% & 91.3\% / 0.3\% & well \\
Qwen2.5-7B & 152k & $3\times$ & 93.9\% & 99.3\% / -- & well \\
Qwen2.5-14B & 152k & $3\times$ & 27.0\% & 80.3\% / 78.7\% & word ring \\
Mistral-7B-v0.3 \citep{Jiang2023-mr} & 32k & $1\times$ & 95.1\% & 100\% / 0\% & well \\
OLMo-2-7B \citep{Team-OLMo2024-ex} & 100k & $6\times$ & 89.6\% & 21.0\% / 8.0\% & well \\
OLMo-3-7B \citep{Team-Olmo:-Allyson-Ettinger2025-bw} & 100k & $3\times$ & 93.8\% & 96.0\% / 0\% & well \\
Gemma-2-2B \citep{Gemma-Team:-Morgane-Riviere2024-av} & 256k & none within $3\times$ & 0.4\% & 4.7\% / 4.3\% & no \\
Gemma-2-9B & 256k & $1\times$ & 12.5\% & 0\% / 11 of 12$^\dagger$ & word ring \\
Phi-3-mini \citep{Abdin2024-xb} & 32k & none within $3\times$ & 18.4\% & 15.7\% / 38.7\% & no \\
\bottomrule
\end{tabular}
\end{table}

\subsection{Setup and Measurement Details}
\label{app:setup}
On the driving lists, $\beta$ is fixed per scene type, 0.12 for the two thirds of lists in normal driving and 0.18 to 0.40 for the other scene types; on UltraFeedback, $\beta = 0.12$. SimPO's target margin $\gamma$ is subtracted from the top response's scaled score at each stage. Because a list of length $K$ yields $K-1$ comparisons, the response ranked $k$-th is rejected in $k-1$ of them, a quantity we call its \emph{exposure}. In the flagship run the rejected responses fall in order of their exposure, so the more often a response serves as a negative baseline, the steeper its drop.

We evaluate each trained run along four axes, since no single metric captures the whole failure. \emph{Mass flow} is defined in \S\ref{sec:measurement}. \emph{Degeneration} combines standard metrics for $n$-gram repetition \citep{Holtzman2019-ca,Welleck2019-lr}, non-ASCII runs, and abnormal length with two measures of our own. A word-stem ring rate catches token-family wells, which we call \emph{word-family rings}, and the base model's per-token negative log-likelihood of the samples catches degeneration that does not repeat. \emph{Held-out ranking} uses 600 unseen lists of a prompt and its responses, or 300 of them for the Mistral runs, the six-fold OLMo-2 runs, and the training trajectories. Top-1 accuracy is the fraction of lists whose chosen response scores highest, and pairwise accuracy is the fraction of response pairs within a list that the model orders correctly. Kendall's $\tau$ \citep{Kendall1938-uf} is computed over the unseen rejected responses, and the margin is the chosen response's mean log-probability minus that of the highest-ranked rejected response. \emph{Offset geometry} covers the norm $\|\offb\|$, the loudness, and, for candidate well tokens, their unembedding row-norm percentile and their counts in the training text and in the $10^8$-token WikiText-103 corpus \citep{merity2017pointer}. For centering, $\offb$ is computed on the current batch against the adapter-free forward pass. Collapse and ranking dynamics are measured across all lineages, while the computationally intensive analyses of \S\ref{sec:mechanism} and \S\ref{sec:when} use the representative subsets named there.

\subsection{Training Curves and Extended-Budget Trajectories}
\label{app:curves}
The runs without offset control drive the ranking loss lowest and the training margin highest, and almost all of their extra margin comes from pushing the rejected responses down (Figure~\ref{fig:curves}). Figure~\ref{fig:olmo6x} shows the six-fold-budget OLMo trajectories behind \S\ref{sec:recipe-limits}, including the per-position loudness penalty runs.

\begin{figure}[h]
\centering
\includegraphics[width=\textwidth]{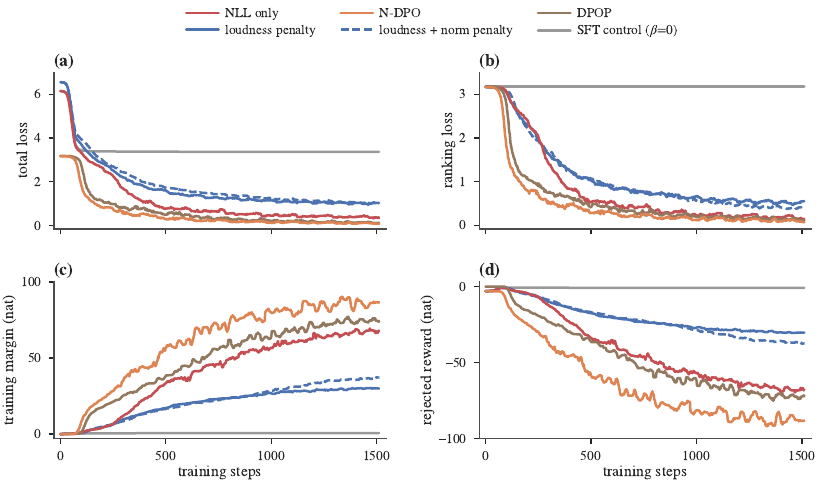}
\caption{Training curves of six Llama-2-7B runs with $K{=}4$ sequential PL on the driving lists at three times the base budget, smoothed with a 25-step running median. (a) Total loss and (b) ranking loss fall lowest in the three runs without offset control, NLL only, N-DPO, and DPO-Positive (DPOP), which end with 71 to 81\% of the released mass in a single well. The ranking loss of the SFT control ($\beta{=}0$) stays at its initial value. (c) Without offset control the training margin reaches 67 to 87 nat, against 30 and 37 nat with it, and (d) almost all of this difference comes from pushing the rejected responses down.}
\label{fig:curves}
\end{figure}

\begin{figure}[h]
\centering
\includegraphics[width=\textwidth]{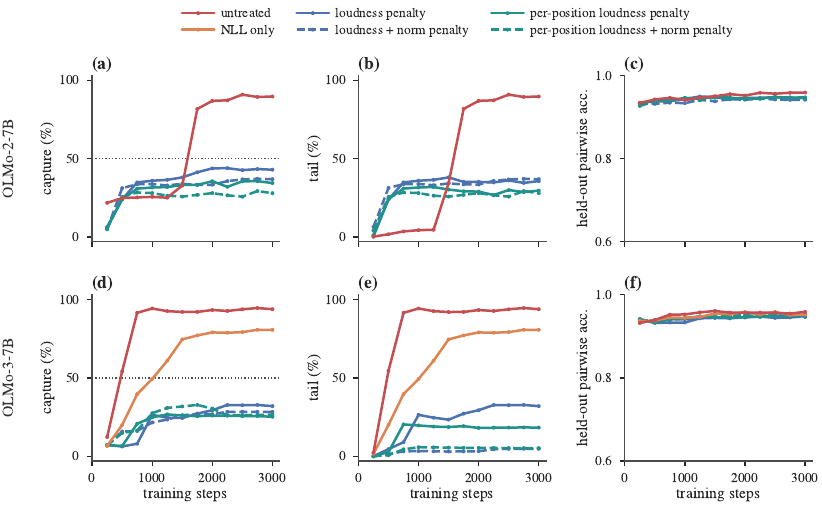}
\caption{Six-fold-budget training trajectories of OLMo-2-7B (a--c) and OLMo-3-7B (d--f) in six conditions, namely untreated, NLL only, and four offset-control variants that penalize loudness, loudness and norm, per-position loudness, or per-position loudness and norm. OLMo-2 has no NLL-only run. (a) and (d) Well capture crosses the dotted 50\% line in the untreated and NLL-only runs and ends at 81 to 94\%, while no offset-control variant exceeds 45\%. (b) and (e) The tail beyond rank 10{,}000 rises with the capture in those runs. With offset control it ends at 28 to 37\% on OLMo-2, but near 5\% on OLMo-3 once the norm penalty is added. (c) and (f) Pairwise accuracy on 300 held-out lists stays between 0.93 and 0.96 in every run.}
\label{fig:olmo6x}
\end{figure}

\subsection{Routes of the Released Mass at Extended Budget}
\label{app:extended}
At six times the base budget, each penalty on the OLMo family closes one route for the released mass, and the mass takes the next one (Figure~\ref{fig:olmo6x}). Penalizing loudness alone moves the well to another token of the same token family. Adding the norm penalty removes the well but sends about half of the released mass to punctuation and premature end-of-text tokens. Applying the loudness penalty at every position, rather than only to the mean offset, does not close this route either. Combined with the norm penalty, it still leaves about 40\% of the mass on punctuation and end-of-text on OLMo-3, and without the norm penalty it turns most OLMo-2 samples into word-family rings. In the treated runs that still leak, the base state plus $\offb$ never makes the well the top token, while the base state plus $\resr$ does so at 11 to 23\% of positions, so the residual has become the component that lifts the surviving well. Penalizing the residual itself, with $\lambda_{\text{res}} \frac{1}{T}\sum_t \|\resr\|$, closes both routes on OLMo-3 and leaves no well, no tail, and unchanged held-out ranking. On OLMo-2, however, the mass moves on into word-family rings and the deep tail.

\subsection{Penalty Trajectories Across Families}
\label{app:penalty}
Figure~\ref{fig:lenloud} shows how each penalty shapes the offset over training on Llama-2 and Qwen. The norm penalty keeps the offset short, while the loudness penalty lets it grow but keeps its largest vocabulary entry small. On Qwen, the loudness penalty alone lets the offset grow to 490 to 550, more than ten times its length under the norm penalty, which is consistent with the family-specific penalty choice in Table~\ref{tab:recipe}.

\begin{figure}[h]
\centering
\includegraphics[width=\textwidth]{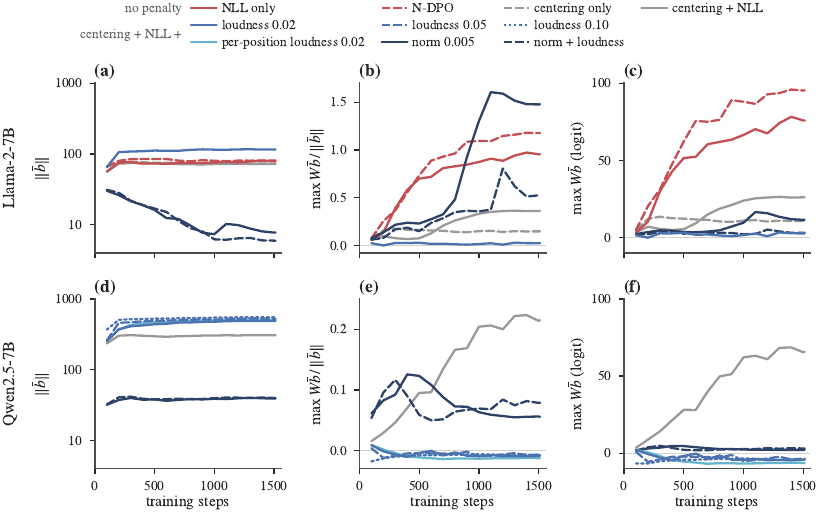}
\caption{(a) Across seven Llama-2-7B runs, $\|\offb\|$ stays between 70 and 85 without a penalty, grows to 116 under the loudness penalty, and falls to 6 to 8 once the norm penalty is added. (b) Divided by $\|\offb\|$, the largest entry of $W\offb$ stays near zero only under the loudness penalty. (c) NLL only and N-DPO lift the largest entry of $W\offb$ to 76 and 95 by the end of training, while the loudness penalty holds it below 4. (d) Across seven Qwen2.5-7B runs, $\|\offb\|$ grows to 490 to 550 under every loudness penalty but stays near 40 once the norm penalty is added. (e) Divided by $\|\offb\|$, the largest entry of $W\offb$ falls below zero under every loudness penalty. (f) Without a penalty, the largest entry of $W\offb$ climbs to 66, while every penalized run stays below 5.}
\label{fig:lenloud}
\end{figure}

\subsection{A Listwise Advantage Orthogonal to Collapse}
\label{app:listwise}
\begin{figure}[h]
\centering
\includegraphics[width=\textwidth]{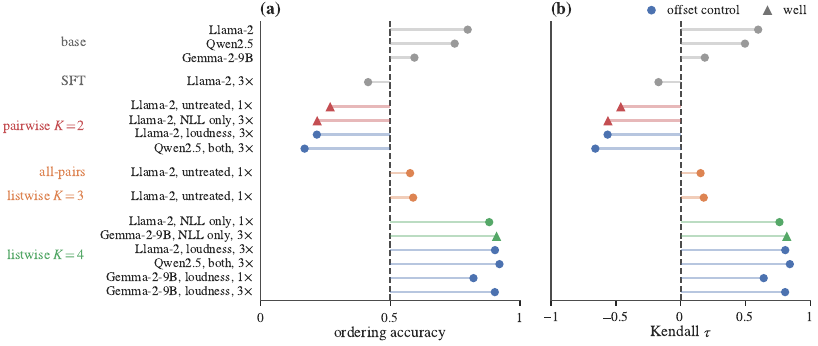}
\caption{(a) On 600 held-out lists, pairwise $K{=}2$ training orders the unseen rejected responses below chance (dashed line), with or without offset control (blue), while every sequential listwise $K{=}4$ run orders them better than the base models. (b) The Kendall $\tau$ over the rejected responses alone is negative for SFT and every pairwise run, and at least 0.64 for every listwise $K{=}4$ run. Triangles mark runs that form a well.}
\label{fig:rr}
\end{figure}
Held-out evaluation reveals one respect in which listwise and pairwise objectives differ, independent of collapse. Beyond whether the single best response is identified, we ask whether a model preserves the graded ordering among the \emph{unseen rejected} responses of a held-out list. Pairwise training never supervises this ordering, since it only compares the top response against one alternative. On the driving lists, pairwise ranking ($K{=}2$) inverts the graded ordering, with ordering accuracy of 0.17 to 0.27 and Kendall $\tau$ of $-0.46$ to $-0.66$, while every sequential listwise $K{=}4$ run preserves it, at 0.82 to 0.92 (Figure~\ref{fig:rr}). All-pairs and $K{=}3$ training fall in between at about 0.58, and SFT alone falls slightly below chance, at 0.415. On UltraFeedback the effect does not appear, and pairwise training instead slightly improves the ordering of rejected responses over the base model. This listwise benefit is therefore domain-dependent and orthogonal to the collapse mechanism of \S\ref{sec:mechanism}. It does not correlate with whether a run nucleates a well, and offset-controlled runs keep whichever ordering their underlying objective produces.

\subsection{Removing the Offset from a Collapsed Model}
\label{app:recovery}
We subtract a multiple of the logit bias $W\offb$ from the untreated flagship model at decoding time, with $\offb$ measured on 80 rejected responses, and decode 60 held-out prompts greedily for 100 tokens. Subtracting $\offb$ from the final hidden state gives the same result, since the unembedding is linear. Removing half of the bias stops the collapse, and removing all of it works nearly as well (Table~\ref{tab:recovery}). Removing only the well's own logit is not enough, because the collapse then moves to the next token of the same direction, a rare Cyrillic token. The recovered model continues text normally but does not produce the answer format, which the untreated run never learned, so removing the offset stops the damage without aligning the model.

\begin{table}[h]
\caption{Subtracting the offset's logit bias from the collapsed flagship model at decoding time. Degen. is the share of degenerate greedy samples, and the log-probabilities are per-token means under teacher forcing with the same subtraction.}
\label{tab:recovery}
\centering
\scriptsize
\begin{tabular}{lccc}
\toprule
Subtracted & Degen. & Chosen log-prob (nat) & Rejected log-prob (nat) \\
\midrule
Nothing & 100\% & $-13.37$ & $-65.01$ \\
Only the well's logit & 22\% & $-8.73$ & $-54.64$ \\
The eight largest entries of $W\offb$ & 65\% & $-7.17$ & $-48.76$ \\
Half of $W\offb$ & 0\% & $-4.70$ & $-39.44$ \\
All of $W\offb$ & 7\% & $-5.34$ & $-24.98$ \\
\bottomrule
\end{tabular}
\end{table}

\subsection{Absolute Logit Scale}
\label{app:zloss}
\begin{figure}[h]
\centering
\includegraphics[width=\textwidth]{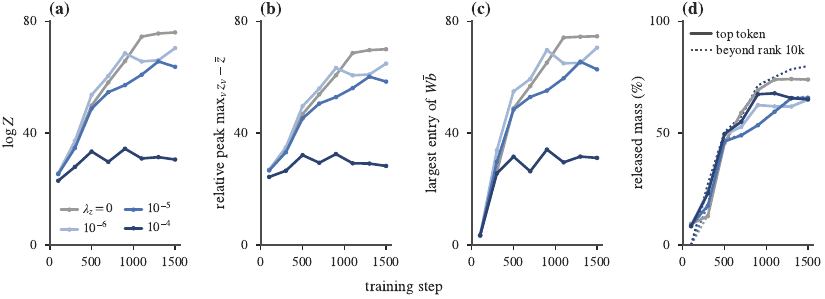}
\caption{A $z$-loss penalty on $\log Z$ compresses absolute logit scale, relative peak, and common-mode vocabulary projection without preventing well capture. Blue lines show three strengths of $\lambda_z$ on the Llama-2 flagship setting, darker for stronger, and the gray line shows the untreated run. (a) The strongest penalty, $\lambda_z=10^{-4}$, lowers $\log Z$ from 76.0 to 30.5 by step 1500. (b) It lowers the relative peak logit by the same factor of 2.5, from 70.0 to 28.2. (c) It lowers the largest entry of $W\offb$ from 74.6 to 31.1. (d) The top token still takes 65 to 66\% of the released mass at every strength, against 74\% untreated, and at $\lambda_z=10^{-4}$ the mass beyond rank 10{,}000 (dotted) rises to 80\%.}
\label{fig:zloss}
\end{figure}
One alternative hypothesis is that collapse is a symptom of the output distribution's absolute scale drifting upward, since a softmax is unchanged when the same constant is added to every logit. We test it by penalizing $\log Z = \log \sum_v \exp(z_v)$, a $z$-loss as in \citet{Chowdhery2023-hc}, at increasing strength on the Llama-2 flagship setting (Figure~\ref{fig:zloss}). The strongest coefficient compresses $\log Z$ 2.5-fold, shrinks the relative peak logit and the common-mode vocabulary projection, and lowers the margin by about one third relative to the untreated run. Held-out ranking accuracy is essentially unchanged, and well capture stays at 65 to 66\% of the released mass, against 74\% untreated. At this coefficient the $z$-loss gradient reaches 19 to 43\% of the ranking-gradient norm, yet collapse persists. We treat this as supporting rather than conclusive evidence, because a raw gradient-norm ratio is not the same as its realized contribution to the optimizer update. Together with \S\ref{sec:localizing}, the result indicates that the operative quantity is not the overall output scale but the offset's projection onto the vocabulary, which a $z$-loss constrains only indirectly and incompletely.

\subsection{Dose Along Training and the Objective's Demand}
\label{app:dose}
We summarize each snapshot by its median dose, the median gain divided by the median base gap over the rejected positions. In all nine lineages traced every 50 steps, the median dose passes one before the well takes over, and the takeover itself takes 100 to 300 steps (Figure~\ref{fig:dose}). Across the sweep over $\beta$, the final dose passes one only under per-token averaging with a demand above the base gap (Figure~\ref{fig:demand}). The standard recipes build the same kind of offset as the collapsing runs, at a smaller dose (Figure~\ref{fig:std-recipes}).

\begin{figure}[h]
\centering
\includegraphics[width=\textwidth]{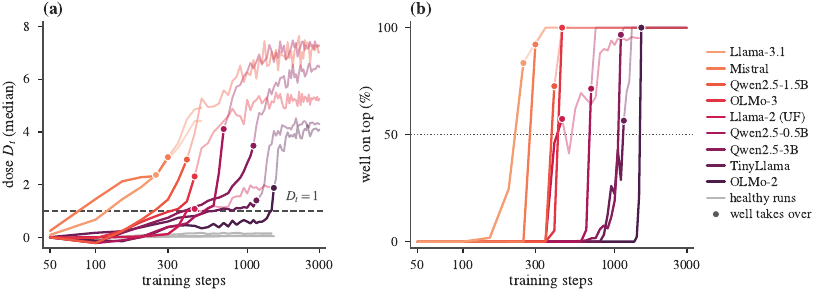}
\caption{Dose along natural training runs, with a dot at the first snapshot at which the well is the top token at a majority of rejected positions. (a) In all nine collapsing lineages, the median dose $D_t = g_t/m_t$, measured every 50 steps, passes one before the dot and lies between 1.1 and 4.1 at the dot, while the three summed-score runs that stay healthy, in gray, remain below 0.2. (b) The share of rejected positions with the well on top climbs from under 10\% to a majority within 100 steps in seven lineages, and within 200 and 300 steps in Qwen2.5-3B and TinyLlama.}
\label{fig:dose}
\end{figure}

\begin{figure}[h]
\centering
\includegraphics[width=\textwidth]{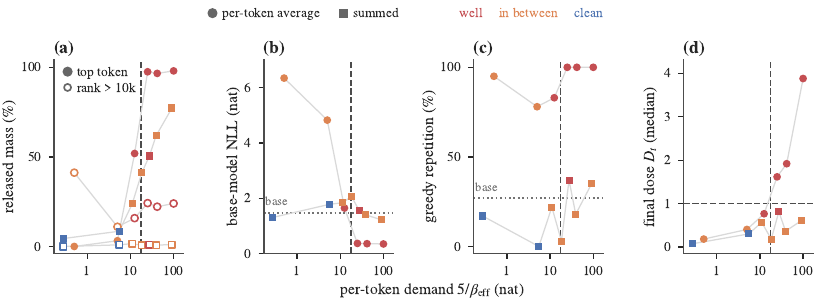}
\caption{Outcome of pairwise training on UltraFeedback against the per-token margin the objective demands, $5/\beta_{\text{eff}}$ with $\beta_{\text{eff}}=\beta$ for per-token-average scores and $\beta|y|$ for summed scores. The dashed vertical line marks the median base gap at the rejected positions, 17.5 nat, and dotted lines mark the base model at 1.48 nat and 27\%. (a) Under per-token averaging the well takes more than 95\% of the released mass once the demand passes the base gap, whereas summed scores concentrate mass gradually and send almost none beyond rank 10{,}000. (b) Samples from the two lowest-demand per-token runs have a high negative log-likelihood under the base model, samples from per-token runs beyond the base gap a low one, and summed runs stay near the base model. (c) Every per-token run repeats under greedy decoding on at least 78\% of prompts, against at most 37\% for summed runs. (d) The final dose passes one only under per-token averaging with a demand above the base gap. Red marks runs with more than half of the released mass on one token and at least 30\% degenerate samples, blue marks clean runs, and orange the runs in between. Doses are for each run's leading token, or for \texttt{Conf} when that token takes less than 5\% of the released mass.}
\label{fig:demand}
\end{figure}

\begin{figure}[h]
\centering
\includegraphics[width=\textwidth]{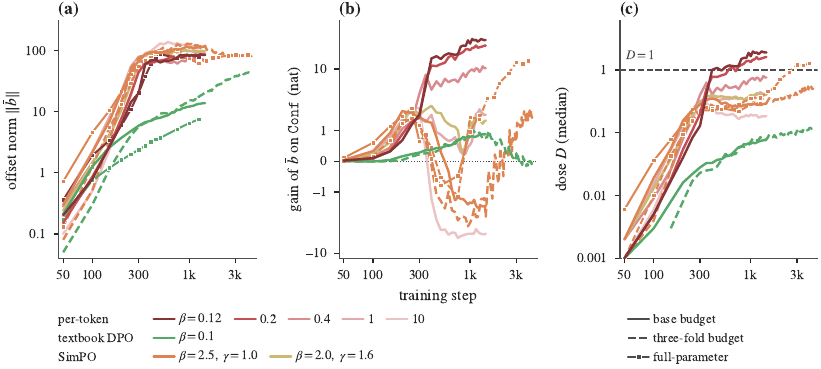}
\caption{Standard recipes build the same offset at a smaller dose. All runs train Llama-2-7B pairwise on UltraFeedback and are plotted against training step on log axes. Per-token averaging at $\beta\le0.2$ collapses, while textbook DPO at $\beta=0.1$ and SimPO at $\beta\in\{2.0,2.5\}$ do not within the base budget. (a) By the end of the base budget, SimPO has built an offset about as long as that of the collapsing runs, and textbook DPO one at most a sixth as long. (b) Only per-token averaging at $\beta\le0.2$ gives \texttt{Conf}, the well of the collapsing run, a logit gain above 20 nat, while textbook DPO and SimPO stay below 3 nat within the base budget. (c) By the end of the base budget, the doses of textbook DPO and SimPO sit four to twenty-five times below those of the collapsing runs, and full-parameter SimPO crosses $D=1$ at step 3000 under the three-fold budget. Squares mark full-parameter fine-tuning, and dashed lines mark three-fold-budget runs, which train for nine epochs under their own schedule. Full-parameter per-token averaging, which collapses onto the space token, and full-parameter textbook DPO appear only in (a).}
\label{fig:std-recipes}
\end{figure}

\subsection{Full-Parameter Onset}
\label{app:fullft}
Under full-parameter fine-tuning, onset tracks the integrated learning rate, the learning rate summed over the steps taken. Two Llama-2 runs, at learning rates $5\times10^{-7}$ and $10^{-6}$ under a 1512-step schedule, nucleated once it reached about $1.7\times10^{-4}$. Before running it, we then wrote down that a run at $10^{-6}$ under the 504-step schedule would nucleate between steps 200 and 300, and it nucleated at step 250. In all full-parameter runs with the per-token objective, the offset alone makes the well the top token at 98 to 100\% of positions, while the fine-tuned unembedding rows alone do so at no more than 1.5\%.

\subsection{A Supervised Fine-Tuning Control}
\label{app:sft}
To check whether the offset needs a ranking term at all, we trained Llama-2-7B on the driving lists at three-fold budget with only the NLL term on the chosen response. Its offset reaches a norm of 56, and the offsets on chosen and rejected responses have a cosine of 0.99. However, the largest entry of $W\offb$ is only 3.4 and the top token takes 8.6\% of the released mass, so no well forms. Plain fine-tuning thus already builds a shared offset, while the ranking term decides where it points.

\subsection{SimPO from a Supervised Fine-Tuning Checkpoint}
\label{app:sftstart}
We start from princeton-nlp/Llama-3-Base-8B-SFT, the Llama-3-8B model that the SimPO authors fine-tuned on UltraChat \citep{meng2024simpo}, and train it with their Llama-3-Base configuration of $\beta=2.0$, $\gamma=1.0$, a learning rate of $6\times10^{-7}$, a global batch of 128, a cosine schedule with a warmup ratio of 0.1, and full-parameter updates. The data are our 15.7k UltraFeedback pairs rendered in the model's chat template. The base budget of 369 steps makes three passes over these pairs, as many examples as about three quarters of one epoch of the official training data, and the three-fold budget of 1107 steps makes nine. We keep fp32 master weights with an 8-bit AdamW optimizer, where the authors use bf16 weights and AdamW.

Before training, we measured the base gap of three candidate wells at the rejected positions. Relative to Llama-3-8B, the SFT checkpoint enlarges the median base gap by 22\% for \texttt{Conf} and by 10\% for \texttt{Apache} and shrinks it by 2\% for \texttt{Sharper}. Transplanting the offset of the Llama-3.1 full-parameter SimPO run at three-fold budget, whose well is \texttt{Apache}, gives a median dose of 1.12 on the SFT checkpoint against 1.27 on Llama-3-8B. By the dose, supervised fine-tuning should therefore delay a well rather than prevent it.

Table~\ref{tab:sftstart} reports both budgets. No well forms, and held-out ranking improves over the SFT checkpoint at both budgets. From one-fold to three-fold budget, the offset grows from 32.5 to 77.3 while its largest vocabulary entry stays small. At the base budget it lifts the ordinary \texttt{Conf} family by about 2 nat, and at three times the budget a flat spread of infrequent multilingual and code fragments by about 4 nat. SimPO from the base models lifts its leading tokens by about 18 nat. No token in the vocabulary reaches a median dose of 0.4, while SimPO from the base models carries \texttt{Apache} on Llama-3.1 and \texttt{Conf} on Llama-2 past one. Meanwhile the samples move away from the SFT model, whose per-token negative log-likelihood of them rises from 1.35 to 3.86. Against the SFT model's own samples for the same 300 held-out prompts, Skywork-Reward-V2 and ArmoRM prefer the one-fold run's samples on 63\% and 65\% of prompts and the three-fold run's on 34\% and 33\%. The held-out chosen responses win on 73\% and 72\% (Table~\ref{tab:rm}).

\begin{table}[h]
\caption{SimPO from the public SFT checkpoint with the official hyperparameters. NLL is the SFT model's per-token negative log-likelihood of sampled responses, the dose is for \texttt{Conf}, and held-out pairwise accuracy is 0.619 for the SFT checkpoint itself.}
\label{tab:sftstart}
\centering
\scriptsize
\begin{tabular}{lcccccccc}
\toprule
Budget & Steps & Top token & Capture / tail & Degen. & NLL & $\|\offb\|$ & Dose & Held-out top-1 / pairwise \\
\midrule
$1\times$ & 369 & \texttt{,} & 2.5\% / 0.6\% & 1.3\% & 1.35 & 32.5 & 0.15 & 0.41 / 0.67 \\
$3\times$ & 1107 & \texttt{(} & 2.7\% / 4.1\% & 0.0\% & 3.86 & 77.3 & 0.19 & 0.58 / 0.75 \\
\bottomrule
\end{tabular}
\end{table}

\subsection{External Reward Models}
\label{app:rm}
Table~\ref{tab:rm} lists the reward-model evaluation behind \S\ref{sec:judge} and the Win column of Table~\ref{tab:objectives}. Each run generates one sample for each of the same 300 held-out UltraFeedback prompts, at temperature 0.7 with nucleus sampling at 0.95 and at most 256 new tokens. Each sample is compared with the sample of the model the run starts from for the same prompt, which is the base model except for the two runs from the SFT checkpoint. The chosen responses of the held-out lists win against the base model on 96\% and 95\% of prompts under the two reward models. Adding the SimPO offset alone to the base model, as in \S\ref{sec:causal}, keeps its samples about as likely under the base model as the base model's own samples. They cost about 0.6 nat per token, against 3.1 to 4.3 for the trained SimPO models, and when they fail they fall into ordinary repetition loops rather than strings of unrelated words.

\begin{table}[h]
\caption{External reward models on 300 held-out UltraFeedback prompts. Win is the share of prompts on which the reward model prefers the run's sample to that of the model the run starts from. Pairwise is held-out pairwise accuracy. Top-10 is the share of released mass that goes to the base model's ten most likely tokens under teacher forcing. The base Llama-2-7B has a held-out pairwise accuracy of 0.60, the base Llama-3.1-8B 0.61, and the SFT checkpoint 0.62. The last four Llama-2 rows train on four-response lists and the others on pairs.}
\label{tab:rm}
\centering
\footnotesize
\begin{tabular}{lcccc}
\toprule
Run & Skywork win & ArmoRM win & Pairwise & Top-10 \\
\midrule
\multicolumn{5}{l}{\emph{Llama-2-7B}} \\
Per-token ranking, $\beta=0.12$ & 25\% & 6\% & 0.68 & 0.02 \\
\quad with a reference model, $1\times$ / $3\times$ & 26\% / 17\% & 7\% / 3\% & 0.68 / 0.71 & 0.04 / 0.05 \\
\quad full-parameter & 24\% & 45\% & 0.70 & 0.14 \\
SimPO, $1\times$ / $3\times$ & 7\% / 6\% & 3\% / 4\% & 0.67 / 0.73 & 0.04 / 0.04 \\
SimPO, $\beta=2.0$, $\gamma=1.6$ & 8\% & 6\% & 0.67 & 0.05 \\
SimPO, full-parameter, $1\times$ / $3\times$ & 10\% / 7\% & 6\% / 4\% & 0.70 / 0.76 & 0.10 / 0.05 \\
Textbook DPO, $1\times$ / $3\times$ & 68\% / 66\% & 67\% / 65\% & 0.63 / 0.64 & 0.77 / 0.77 \\
Textbook DPO, full-parameter & 66\% & 67\% & 0.63 & 0.80 \\
NLL only, $\lambda=1$ & 64\% & 64\% & 0.63 & 0.75 \\
Centering only & 20\% & 23\% & 0.68 & 0.42 \\
Recipe, $\lambda_{\text{NLL}}=1.0$ / $0.1$ & 65\% / 52\% & 66\% / 55\% & 0.63 / 0.66 & 0.74 / 0.45 \\
Centering and loudness penalty & 17\% & 15\% & 0.69 & 0.17 \\
\midrule
\multicolumn{5}{l}{\emph{Llama-3.1-8B}} \\
Per-token ranking, $\beta=0.12$ & 11\% & 5\% & 0.74 & 0.02 \\
SimPO, full-parameter, $1\times$ / $3\times$ & 13\% / 3\% & 14\% / 5\% & 0.78 / 0.79 & 0.01 / 0.03 \\
\midrule
\multicolumn{5}{l}{\emph{Llama-3-8B from the SimPO authors' SFT checkpoint}} \\
SimPO, official recipe, $1\times$ / $3\times$ & 63\% / 34\% & 65\% / 33\% & 0.67 / 0.75 & 0.67 / 0.46 \\
\bottomrule
\end{tabular}
\end{table}

\begin{figure}[h]
\centering
\includegraphics[width=\textwidth]{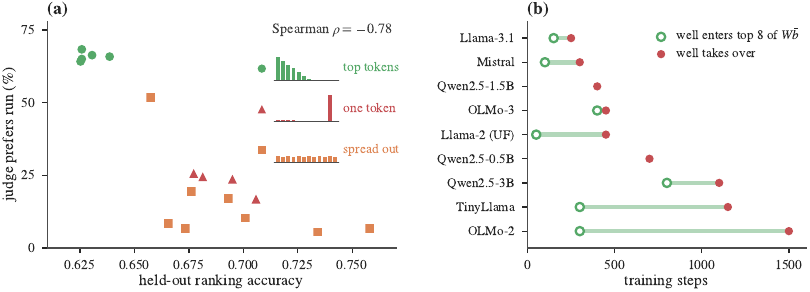}
\caption{(a) Across 17 UltraFeedback runs, the better a run ranks held-out responses, the less often a public reward model prefers its samples to the base model's; the glyphs show where each group's released mass goes. (b) In seven of nine collapsing lineages, the eventual well enters the eight largest entries of $W\offb$ before it takes over the rejected positions.}
\label{fig:margin}
\end{figure}

\begin{figure}[h]
\centering
\includegraphics[width=\textwidth]{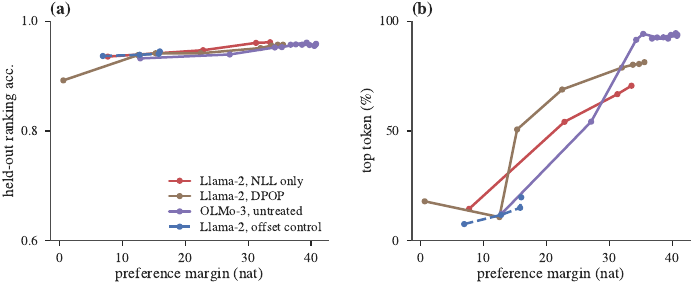}
\caption{(a) Held-out ranking accuracy and (b) the top token's share of released mass against the preference margin along training. Ranking saturates early for every run, while the top token's share keeps growing with the margin only in the runs without offset control.}
\label{fig:margin-app}
\end{figure}

\subsection{Shapes of Well That Standard Rulers Miss}
\label{app:rulers}
We compared the standard $n$-gram repetition ruler with our mass-flow and word-stem measurements on every run. The standard ruler correctly flags the flagship run's repetition loop, in which 96\% of samples degenerate, but it misses three other shapes of well. A \emph{latent well} appears only after the visible answer ends, meaning greedy decoding within a length-matched budget never reaches it. A \emph{non-repeating well} keeps changing its surface form, for example a code-path continuation, while still capturing most of the released mass. The ruler then reports little repetition despite severe collapse. A \emph{word-family ring} lacks a single dominant token, because a cluster of low-frequency, morphologically related tokens takes the released mass together. A ruler that counts single tokens therefore underreports the failure. Furthermore, these standard rulers miss the diffuse degradation from \S\ref{sec:demand}, which leaves no repetitive patterns to count but is exposed by evaluating the base model's likelihood of the generated samples. Any conventional ruler used in isolation can therefore pass a model that the offset and mass-flow measurements from \S\ref{sec:measurement} reveal has already collapsed.

\subsection{Supplementary Measurements}
\label{app:supp}
These measurements support claims of the main text, grouped by section.

\textbf{The offset is shared (\S\ref{sec:localizing}).} Recomputed on held-out text with phrasing the model never saw, the flagship offset keeps a cosine of 0.85 with the original.

\textbf{Dose threshold (\S\ref{sec:dose}).} When the well takes over along natural training runs, it goes from under 10\% of the rejected positions to a majority within 100 to 300 steps (Figure~\ref{fig:dose}b).

\textbf{Anomalous rows (\S\ref{sec:geometry}).} For every single-token well, we looked up the norm of its unembedding row and counted how often the token occurs in our training text and in WikiText-103. Most single-token wells of the adapter runs on the driving data never appear in our training text. In 13 bases, the highest-norm row belongs to a rare token, and its gain $g_t$ can overtake base gaps of 17 to 39 nat. Under full-parameter fine-tuning on UltraFeedback, the Llama family collapses onto the bare space token, and there too the offset alone makes the well the top token at nearly every position.

\textbf{Losses and demand (\S\ref{sec:demand}).} On the driving lists, the per-token loss with a reference model collapses onto \texttt{textt} at the base budget and onto a rare Cyrillic token at the three-fold budget. Per-token training at $\beta$ of 1 and 10, whose demand is also low, degrades in the same diffuse way (Figure~\ref{fig:demand}). With three times the budget, the SimPO offset concentrates its projection on a few vocabulary rows even where it becomes shorter. By the end of full-parameter training, its leading token takes 27 to 45\% of the released mass (Table~\ref{tab:objectives}).

\textbf{Ranking cost of the recipe (\S\ref{sec:recipe-validation}).} Eight base models have treated runs in our held-out evaluation. We compare each treated run with the untreated run of the same base and budget, or with the NLL-only run where no untreated run was scored. Every treated run but one lies between 1.8 points below and 0.1 points above its comparison run. The exception is the loudness variant of the recipe on Gemma-2-9B at the base budget, which barely lowers the rejected responses in training and scores 3.1 points below the untreated run, while the combined penalty on the same base scores 0.3 points below it.

\textbf{Margin along training (\S\ref{sec:dissociation}).} The margin reaches three or four times its value at the point where ranking saturates. At a matched margin of 12 to 16 nat, Llama-2 runs without offset control have already channeled 42 to 51\% of the released mass into a single well, against 8 to 20\% for offset-controlled runs (Figure~\ref{fig:margin-app}). The margin at which capture takes off also differs by base model, from 13 to 27 nat on six-fold OLMo-3 to 44 to 59 nat on OLMo-2, so the margin cannot serve as a stopping criterion.

\subsection{Seeds and Scope}
\label{app:scope}
Both untreated seeds of the flagship settings collapse onto the same well as the first, \texttt{textt} on the driving lists and \texttt{Conf} on UltraFeedback. Their offset norms lie within 3\% of the first seed's, and their dose crosses one no later than the well takes over. Both treated seeds stay free of a well, with 10 to 12\% residual degeneration and unchanged held-out ranking. Adapter rank from 8 to 128 and attention-only adapters change when a well forms rather than whether. Across these ranks, offset norms lie between 80 and 117 on Llama-2 and Llama-3.1, and rank 8 on Llama-2 nucleates the same well at three-fold budget. The demand analysis of \S\ref{sec:demand} covers pairwise UltraFeedback training under per-token averaging with $\beta$ from 0.05 to 10, under textbook DPO with $\beta$ from 0.1 to 0.0003, and under SimPO with $\beta$ and $\gamma$ from its published tuning range. These runs use our adapter and full-parameter protocols for three or nine epochs.


\subsection{Common-Shift Invariance and the Trained Offset}
\label{app:invariance}
Write $s_\theta(x,y)$ for the score of a response, the per-token average of \S\ref{sec:setup} or the summed log-ratio of textbook DPO. Both $\mathcal{L}_{\text{PL}}$ and the DPO loss depend on the scores of a prompt's responses only through their differences, since the DPO reference term does not depend on $\theta$.

\begin{proposition}[Loss-level invariance]
\label{prop:loss}
Let $\ell$ be invariant under adding a constant to the scores of all responses of a prompt, and write a score perturbation as $\varepsilon=\bar\varepsilon+\tilde\varepsilon$ with $\bar\varepsilon(x)$ its mean over the prompt's responses. Then $\ell(s+\varepsilon)=\ell(s+\tilde\varepsilon)$, so the common part $\bar\varepsilon$ cannot be identified from the objective with any amount of preference data, and $\sum_y\partial\ell/\partial s(x,y)=0$ for every prompt.
\end{proposition}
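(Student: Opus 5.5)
The plan is to prove the three claims of Proposition~\ref{prop:loss} in order, each following directly from the invariance hypothesis. Fix a prompt $x$ with responses $y_1,\dots,y_K$, and view $\ell$ as a function of the score vector $s=(s(x,y_1),\dots,s(x,y_K))\in\mathbb{R}^K$. The hypothesis is that $\ell(s+c\mathbf 1)=\ell(s)$ for every $c\in\mathbb{R}$. Write $\bar\varepsilon(x)=\frac1K\sum_k\varepsilon(x,y_k)$ and $\tilde\varepsilon=\varepsilon-\bar\varepsilon\mathbf 1$, so that $\varepsilon=\bar\varepsilon\mathbf 1+\tilde\varepsilon$ and $\sum_k\tilde\varepsilon_k=0$.

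\textbf{First claim.} Apply the hypothesis with base point $s+\tilde\varepsilon$ and constant $c=\bar\varepsilon(x)$:
\[
\ell(s+\varepsilon)=\ell\big((s+\tilde\varepsilon)+\bar\varepsilon\mathbf 1\big)=\ell(s+\tilde\varepsilon).
\]
For a dataset of many prompts, the total objective is a sum of per-prompt terms. The same argument applies prompt by prompt with a prompt-dependent constant, because each term sees only its own prompt's scores.

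\textbf{Non-identifiability.} Two score functions that differ by an arbitrary function $\bar\varepsilon(x)$ of the prompt alone give identical loss values on every prompt. They are therefore indistinguishable by the objective, whatever preference data (any lists or pairs, any number of them) is observed. I would state this as: the map from score functions to the loss functional factors through the quotient by prompt-wise constants. This is the standard Bradley--Terry/Plackett--Luce identifiability remark already cited in \S\ref{sec:setup}.

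\textbf{Gradient identity.} Assuming $\ell$ is differentiable, as it is for $\mathcal{L}_{\text{PL}}$ and the logistic DPO loss, differentiate $c\mapsto\ell(s+c\mathbf 1)$ at $c=0$. The function is constant, so by the chain rule
\[
0=\frac{d}{dc}\ell(s+c\mathbf 1)\Big|_{c=0}=\sum_k\frac{\partial\ell}{\partial s(x,y_k)}.
\]
I would also confirm the hypothesis holds for the two losses named before the proposition. For the PL loss, each stage is a softmax over $\beta s$, which is unchanged by a common shift. For DPO, the loss is a function of $s_w-s_l$, and the reference term is $\theta$-independent and so sits inside $s$ harmlessly.

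The only delicate point is the multi-prompt and minibatch case, where invariance must be per-prompt rather than global. I would handle it by noting that the loss is additively separable across prompts. Nothing here is a real obstacle; the argument is a few lines.
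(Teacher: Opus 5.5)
Your proposal is correct and follows the paper's proof. The paper gets the first claim from score differences within a prompt cancelling $\bar\varepsilon$, where you apply the invariance hypothesis directly at the base point $s+\tilde\varepsilon$ (the same step), and it gets the gradient identity, as you do, by differentiating $c\mapsto\ell(s+c\mathbf 1)$ at $c=0$.
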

\begin{proof}
Score differences within a prompt cancel $\bar\varepsilon$. Differentiating $\ell(s+c\mathbf 1)=\ell(s)$ in $c$ at $c=0$ gives the gradient identity.
\end{proof}

\begin{proposition}[First-order constraint on a shared hidden shift]
\label{prop:offset}
Let $W$ be the unembedding with rows $w_v$, $h_t$ the final hidden state at position $t$ and $p_t=\mathrm{softmax}(Wh_t)$. Shifting every hidden state of every response by the same $\delta$ changes a per-token-average score by
\begin{align*}
s(x,y;\delta)-s(x,y;0)&=\frac{1}{|y|}\sum_t\Big[\langle w_{y_t},\delta\rangle-\log\mathbb{E}_{v\sim p_t}e^{\langle w_v,\delta\rangle}\Big]\\
&=\langle g_y,\delta\rangle-\frac{1}{2|y|}\sum_t\mathrm{Var}_{v\sim p_t}\langle w_v,\delta\rangle+O(\|\delta\|^3),
\end{align*}
with $g_y=\frac{1}{|y|}\sum_t\big(w_{y_t}-\mathbb{E}_{v\sim p_t}w_v\big)$ the mean gradient of the score with respect to the hidden states. Hence $\nabla_\delta L|_{\delta=0}=\mathbb{E}_x\sum_y\frac{\partial\ell}{\partial s(x,y)}\,g_y$, which for a pairwise logistic loss equals $-\mathbb{E}_x[a(x)(g_{y_w}-g_{y_l})]$ with $a(x)=\beta\sigma(-\beta(\Delta_\theta-\Delta_{\mathrm{ref}}))>0$ and $\Delta_{\mathrm{ref}}=0$ without a reference model. To first order the objective constrains $\delta$ along this single direction, and on the orthogonal complement only the second-order term remains, which enters through the difference between the responses' mean per-position variances.
\end{proposition}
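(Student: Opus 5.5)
The proof is a direct computation in three steps: expand the log-softmax under a shift $\delta$, then push the result through the chain rule using Proposition~\ref{prop:loss}, then specialise to the pairwise logistic loss.

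\emph{Exact per-token change.} For fixed position $t$ the log-probability is $\log p_t(y_t)=\langle w_{y_t},h_t\rangle-\log\sum_v e^{\langle w_v,h_t\rangle}$. Replacing $h_t$ by $h_t+\delta$ adds $\langle w_{y_t},\delta\rangle$ to the first term. For the second, write $\sum_v e^{\langle w_v,h_t+\delta\rangle}=Z_t\sum_v p_t(v)e^{\langle w_v,\delta\rangle}$, so its change is $\log\mathbb{E}_{v\sim p_t}e^{\langle w_v,\delta\rangle}$. Averaging over $t$ gives the first displayed line exactly.

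\emph{Second-order expansion.} Treat $\log\mathbb{E}_{p_t}e^{\langle w_v,\delta\rangle}$ as the cumulant generating function of the scalar random variable $X=\langle w_v,\delta\rangle$ under $p_t$, evaluated at parameter $1$. Its expansion is $\mathbb{E}X+\tfrac12\mathrm{Var}X+O(\mathbb{E}|X|^3)$, and since $|X|\le\max_v\|w_v\|\,\|\delta\|$ over a finite vocabulary, the remainder is $O(\|\delta\|^3)$ uniformly. Substituting $\mathbb{E}X=\langle\mathbb{E}_{p_t}w_v,\delta\rangle$ and collecting the linear terms yields $\langle g_y,\delta\rangle$. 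The variance terms contribute $-\frac{1}{2|y|}\sum_t\mathrm{Var}$, which is the second line.

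\emph{Gradient of the loss.} By the chain rule, $\nabla_\delta L|_0=\mathbb{E}_x\sum_y\frac{\partial\ell}{\partial s(x,y)}\,\nabla_\delta s(x,y;\delta)|_0$, and the first-order term gives $\nabla_\delta s(x,y;\delta)|_0=g_y$. In the pairwise case, $\ell=-\log\sigma(\beta(s_w-s_l-\Delta_{\mathrm{ref}}))$, so $\partial\ell/\partial s_w=-\beta\sigma(-\beta(\Delta_\theta-\Delta_{\mathrm{ref}}))=-a(x)$ and $\partial\ell/\partial s_l=+a(x)$. These two partials sum to zero, consistent with Proposition~\ref{prop:loss}, and together they give $-\mathbb{E}_x[a(x)(g_{y_w}-g_{y_l})]$. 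For a fixed prompt the loss therefore sees $\delta$ to first order only through $\langle g_{y_w}-g_{y_l},\delta\rangle$. For $\delta$ orthogonal to this direction, the linear terms cancel in the margin $s_w-s_l$, leaving $-\tfrac12\big(\frac{1}{|y_w|}\sum_t\mathrm{Var}^{(w)}-\frac{1}{|y_l|}\sum_t\mathrm{Var}^{(l)}\big)+O(\|\delta\|^3)$. This is the difference of mean per-position variances the statement describes.

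\emph{Main obstacle.} The delicate step is making ``a single direction'' precise when the loss is averaged over prompts. The aggregate gradient is one vector, namely the $a(x)$-weighted expectation of $g_{y_w}-g_{y_l}$, but the per-prompt directions differ. I would state the claim at the level of the total objective $L(\delta)$: the first-order term is $\langle\nabla_\delta L|_0,\delta\rangle$, so it vanishes on the orthogonal complement of that one aggregate vector, and only second-order terms survive there. For the listwise PL loss, the same chain rule goes through, with the weights $\partial\ell/\partial s$ summing to zero by Proposition~\ref{prop:loss}.
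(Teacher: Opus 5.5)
Your proposal is correct and follows the paper's proof step for step: the exact log-softmax change under the logit shift $W\delta$, the cumulant expansion of $\log\mathbb{E}_{v\sim p_t}e^{\langle w_v,\delta\rangle}$, and the chain rule with per-prompt coefficients that sum to zero by Proposition~\ref{prop:loss}, which you make explicit for the pairwise logistic loss. Your separation of the per-prompt direction $g_{y_w}-g_{y_l}$ from the aggregate gradient is a useful sharpening that the paper's one-line proof skips: on the complement of the aggregate vector, a curvature term proportional to $\ell''\,\langle g_{y_w}-g_{y_l},\delta\rangle^2$ also survives at second order, so the variance-difference description holds exactly only prompt by prompt.
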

\begin{proof}
The first equality is the change of $\log\mathrm{softmax}$ under the logit shift $W\delta$, the second the cumulant expansion of $\log\mathbb{E}\,e^{\langle w_v,\delta\rangle}$, and the gradient follows from the chain rule with coefficients that sum to zero within each prompt by Proposition~\ref{prop:loss}.
\end{proof}

Three remarks bound the statement. The constraint is first-order and local, since the softmax is nonlinear and minibatch gradients differ from the expectation, so the complement is weakly rather than exactly unconstrained. The shift $\delta$ is not a parameter but the image of parameter updates through Jacobians shared across positions and both responses. This sharing is how a component in the weakly constrained part arises at all, and gradient entanglement \citep{yuan2025common} describes the coupling that produces it. Any term that depends on absolute scores, such as an NLL term on the chosen response, breaks the invariance, which is one reason such terms delay displacement (\S\ref{sec:related}).

Table~\ref{tab:nullspace} measures how much of the trained offset lies in the weakly constrained part. For each run we add the offset $\delta$ of \S\ref{sec:carrier-def} to the base model's final hidden states on 200 held-out lists. For each list we record the change of the chosen per-token score, $a^{+}$, and of the mean rejected per-token score, $a^{-}$, and split it into a common part $c=(a^{+}+a^{-})/2$ and a differential part $d=(a^{+}-a^{-})/2$. The table reports the means of $a^{+}$, $a^{-}$, and $c$ over the lists and the mean of $|d|$, which exceeds half the difference of the mean scores because $d$ changes sign from list to list. We report the same two means for three norm-matched random directions, and the cosine between $\delta$ and $\bar u=\frac{1}{N}\sum_x\big(g_{y^{(1)}}-\frac{1}{K-1}\sum_{j=2}^{K}g_{y^{(j)}}\big)$, the base model's chosen-minus-rejected score gradient with every list and every rejected response weighted equally. For the three per-token UltraFeedback runs, which pair the chosen response with a randomly drawn rejected one, $\bar u$ is the direction of Proposition~\ref{prop:offset} with equal weights, and at the base model those weights vary across lists by only 5 to 7\% of their mean. For the driving run $\bar u$ approximates that direction, since the $K=4$ loss weights the rejected responses by rank. In the four collapsing runs the common part is 20 to 530 times the differential part and the cosine is at most 0.16. The offset carries the collapse-producing component without accounting for all of the likelihood displacement. On the driving run it alone lowers the chosen score by 65 nat while training lowered it by 21 on the same lists on the same lists, so the position-specific residual partly compensates its effect on the chosen response, consistent with \S\ref{sec:localizing}. For the two standard-recipe runs both parts are small, below 0.02 nat under textbook DPO and 0.25 and 0.06 nat under SimPO, because their quiet offsets lift no token near the top.

\begin{table}[h]
\caption{The trained offset $\delta$ added to the base model's final hidden states on 200 held-out lists. $a^{+}$ and $a^{-}$ are the mean changes of the chosen and mean rejected per-token scores in nat, $c$ the mean of $(a^{+}+a^{-})/2$ and $|d|$ the mean of $|a^{+}-a^{-}|/2$ over lists, ``random'' the same two means for three norm-matched random directions, and cos the cosine between $\delta$ and $\bar u$, the base model's chosen-minus-rejected score gradient.}
\label{tab:nullspace}
\centering
\scriptsize
\resizebox{\textwidth}{!}{%
\begin{tabular}{llccccccc}
\toprule
Base & Data, objective & $\|\delta\|$ & $a^{+}$ & $a^{-}$ & $c$ & $|d|$ & random $|c|$ / $|d|$ & cos \\
\midrule
Llama-2-7B & UltraFeedback, per-token $\beta=0.12$ (collapses) & 84 & $-14.5$ & $-15.2$ & $-14.9$ & 0.68 & 0.30 / 0.06 & 0.03 \\
Llama-2-7B & driving, $K=4$ (collapses) & 93 & $-65.1$ & $-65.2$ & $-65.2$ & 0.12 & 0.49 / 0.02 & 0.16 \\
Qwen2.5-7B & UltraFeedback, per-token $\beta=0.12$ (collapses) & 282 & $-26.0$ & $-27.1$ & $-26.6$ & 1.36 & 2.15 / 0.31 & $-0.02$ \\
Llama-3.1-8B & UltraFeedback, per-token $\beta=0.12$ (collapses) & 109 & $-29.2$ & $-30.1$ & $-29.6$ & 0.94 & 0.54 / 0.10 & 0.05 \\
Llama-2-7B & UltraFeedback, textbook DPO $\beta=0.1$ & 14 & $-0.005$ & $-0.013$ & $-0.009$ & 0.014 & 0.011 / 0.008 & 0.07 \\
Llama-2-7B & UltraFeedback, SimPO $\beta=2.5$, $\gamma=1.0$ & 114 & $-0.23$ & $-0.26$ & $-0.25$ & 0.06 & 0.57 / 0.09 & 0.01 \\
\bottomrule
\end{tabular}}
\end{table}

\end{document}